\theoremstyle{plain}
\newtheorem{theorem}{Theorem}[section]
\newtheorem{proposition}[theorem]{Proposition}
\newtheorem{lemma}[theorem]{Lemma}
\theoremstyle{definition}
\theoremstyle{remark}
\icmltitlerunning{Channel Importance Matters in Few-Shot Image Classification}
\def \xx {\bm{x}}
\def \zz {\bm{z}}
\def \hz {\bm{\widetilde{z}}}
\def \uu {\bm{\mu}}
\def \hu {\bm{\widetilde{\mu}}}
\def \oo {\bm{\omega}}
\def \SS {\bm{\Sigma}}
\def \ss {\bm{\sigma}}
\def \wz {\widetilde{z}}
\def \wu {\widetilde{\mu}}
\def \ws {\widetilde{\sigma}}
\newlength{\Oldarrayrulewidth}
\newcommand{\acc}[2]{#1\scriptsize\textcolor[RGB]{168,8,13}{+#2}}
\newcommand{\macc}[2]{#1\scriptsize\textcolor[RGB]{61,145,64}{-#2}}
\newcommand{\zacc}[2]{#1\scriptsize\textcolor[RGB]{61,145,64}{+#2}}
\newcommand{\pacc}[2]{#1\scriptsize\textcolor[RGB]{168,8,13}{$\pm$#2}}
\begin{document}

\twocolumn[


\icmltitle{Channel Importance Matters in Few-Shot Image Classification}





\icmlsetsymbol{equal}{*}

\begin{icmlauthorlist}
\icmlauthor{Xu Luo}{1}
\icmlauthor{Jing Xu}{2}
\icmlauthor{Zenglin Xu}{2,3}
\end{icmlauthorlist}

\icmlaffiliation{1}{University of Electronic Science and Technology of China}
\icmlaffiliation{2}{Harbin Institute of Technology Shenzhen}
\icmlaffiliation{3}{Pengcheng Laboratory}

\icmlcorrespondingauthor{Xu Luo}{frank.luox@outlook.com}
\icmlcorrespondingauthor{Zenglin Xu}{xuzenglin@hit.edu.cn}

\icmlkeywords{Machine Learning, ICML}

\vskip 0.3in
]



\printAffiliationsAndNotice{}  
%
\begin{abstract}
Few-Shot Learning (FSL) requires vision models to quickly adapt to brand-new classification tasks with a shift in task distribution. Understanding the difficulties posed by this \emph{task distribution shift} is central to FSL. In this paper, we show that a simple channel-wise feature transformation may be the key to unraveling this secret from a channel perspective. When facing novel few-shot tasks in the test-time datasets, this transformation can greatly improve the generalization ability of learned image representations, while being agnostic to the choice of datasets and training algorithms. Through an in-depth analysis of this transformation, we find that the difficulty of representation transfer in FSL stems from the severe \emph{channel bias} problem of image representations: channels may have different importance in different tasks, while  convolutional neural networks are likely to be insensitive, or respond incorrectly to such a shift. This  points out a core problem of the generalization ability of modern vision systems which needs further attention in the future. Our code is available at \url{https://github.com/Frankluox/Channel_Importance_FSL}.







\end{abstract}
\section{Introduction}
Deep convolutional neural networks~\cite{Alexnet,Resnet} have revolutionized computer vision in the last decade, making it possible to automatically learn representations from a large number of images. The learned representations can generalize well to brand-new images. As a result,  image  classification performance is close to humans on most benchmarks. However, in addition to recognizing previously-seen categories, humans can quickly change their focus of image patterns in changing environments and recognize new categories given only a few observations. This fast learning capability, known as Few-Shot Learning (FSL), challenges current vision models on the ability to quickly adapt to novel classification tasks that are different from those in training. This \emph{task distribution shift} means that categories, domains of images or granularity of categories in new tasks deviate from those in the training tasks.


Recent studies of few-shot image classification have highlighted the importance of the quality of learned image representations~\cite{mamlrepresentation,crosstransformer,baseline,allyouneed,IER}, and also showed that representations learned by neural networks do not generalize well to novel few-shot classification tasks when there is task distribution shift~\cite{metabaseline,crosstransformer,sensitivity}. Thus it is crucial to understand how task distribution shift affects the generalization ability of image representations in few-shot learning.

\begin{figure}[t]
\centering
\centerline{\includegraphics[width=1.0\columnwidth]{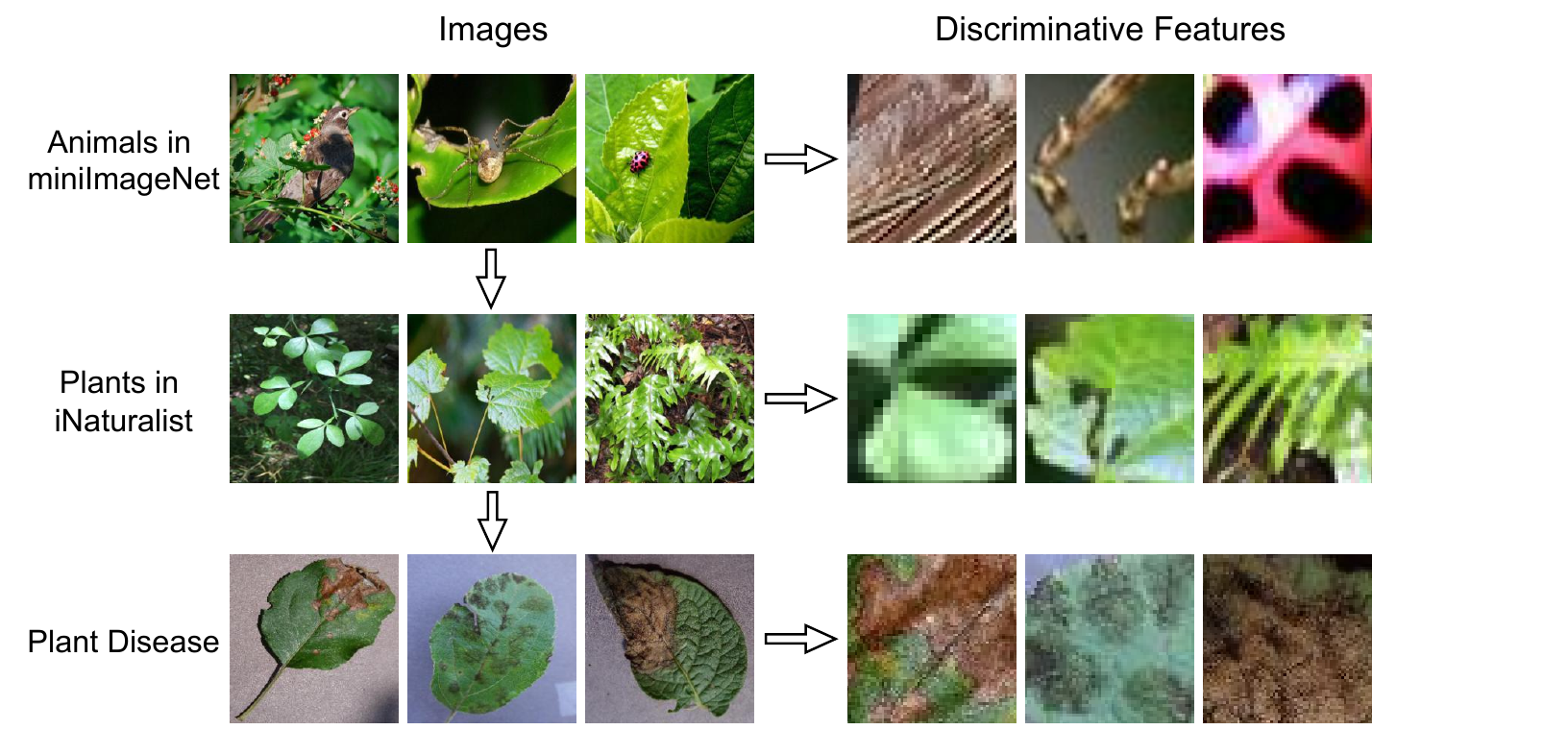}}
\caption{\textbf{Examples of task distribution shift.} Different classification tasks may focus on distinct discriminative information. Top: animals in \emph{mini}ImageNet with different plants as background. Middle: plants as the main categories in iNaturalist. Bottom: Different types of plant diseases in the fine-grained Plant Disease dataset.}
\label{example_channelchange}

\vskip -0.25in
\end{figure}

As shown in Figure \ref{example_channelchange}, task distribution shift may lead to changes in discriminative image features that are critical to the classification task at hand. For example, in the task of recognizing animals, a convolutional neural network trained on \emph{mini}ImageNet can successfully identify the discriminative information related to animals. Although the representations learned by the network may encode some plant information (from image background), plants do not appear as a main category in miniImageNet and it may be insufficient for the network to distinguish various plants in a novel few-shot task sampled from the iNaturalist dataset. Even when the network is well trained to recognize plants on iNaturalist, it is difficult to be adapted to the novel task of identifying plant diseases due to the granularity shift, since the discriminative information now becomes the more fine-grained lesion part of leaves.

In this paper, we show that this difficulty encountered in few-shot learning leads to a \emph{channel bias} problem in learned image representations (i.e., features). Specifically, in the layer after global pooling, different channels in the learned feature seek for different patterns~(as verified in~\cite{emergingobject,dissection}) during training, and the channels are weighted (in a biased way) based on their importance to the training task.  However, when applied to novel few-shot classification tasks, the learned image features usually do not change much or have inappropriately changed without adapting to categories in novel tasks. This bias towards training tasks may result in imprecise attention to image features in novel tasks.




What leads to our discovery of the channel bias problem is a simple transformation function that we found in a mathematical analysis textbook. Applied to top of image representations channel-wisely only at test time on the fly, this transformation function can consistently and largely improve predictions for out-of-distribution few-shot classification tasks, being agnostic to the choice of datasets and training algorithms~(e.g., 0.5-7.5\% average improvement over 5-way 5-shot tasks on 19 different test-time datasets, as shown in Table~\ref{performance}). Through analysis, we reveal the existence of channel bias problem, and show that this transformation rectifies channel emphasis by adjusting the Mean Magnitude of Channels (MMC) of image representations over the target 
task. Concretely, it serves as a smoothing function that suppresses channels of large MMC and largely amplifies channels of small MMC. 








To further understand the channel bias problem, we derive an oracle adjustment on the MMC of image representations in binary classification tasks. Such studies demonstrate that the channel bias problem exists in many different target tasks with various types of task distributions shift, and it becomes severe with the distribution shift expanding (as shown in Figure~\ref{visiualize_optimal}). In addition, through test-time shot analysis, we verify that the channel bias problem requires more attention in few-shot setting, while simple fine-tuning can help address this problem in many-shot setting.

\section{A Channel-wise Feature Transformation}
\subsection{Problem Setup}
In few-shot image classification, a training set $\mathcal{D}^{train}$ is used at first to train a neural network parametrized by $\theta$, which will be evaluated on a series of few-shot classification tasks constructed from the test-time dataset $\mathcal{D}^{test}$. Importantly,
there should be task distribution shift between $\mathcal{D}^{train}$ and $\mathcal{D}^{test}$, which may include category shift, domain shift or granularity shift. Each evaluated $N$-way $K$-shot few-shot classification task $\tau$ is constructed by first sampling $N$ classes from $\mathcal{D}^{test}$, and then sampling $K$ and $M$ images from each class to constitute a support set $\mathcal{S}_\tau$ and a query set $\mathcal{Q}_\tau$, respectively. The support set $\mathcal{S}_\tau=\{(x_{k,n}^\tau, y_{k,n}^\tau)\}_{k,n=1}^{K,N}$ consisting of $K\times N$ images $x_{k,n}^\tau$ and corresponding labels $y_{k,n}^\tau$ from the $N$ classes is used to construct a classifier $p_\theta(\cdot|x, \mathcal{S}_\tau)$, which is further evaluated on the query set $\mathcal{Q}_\tau=\{x_{m,n}^{*\tau}\}_{m,n=1}^{M,N}$. The evaluation metric is the average prediction accuracy on query set over all sampled few-shot classification tasks.

\begin{table*}[t]
\setlength\tabcolsep{2.5pt}
\footnotesize
\caption{\textbf{Performance gains of the simple feature transformation on various training and testing datasets with a broad range of choices of network architectures and algorithms}. The black values indicate the original accuracy, and the red values indicate the increase.  Each running of evaluation contains 10000 5-way 5-shot tasks sampled using a fixed seed, and the average accuracy is reported. The three groups of test-time datasets come from MetaDataset, BSCD-FSL benchmark and DomainNet, respectively.}
\label{performance}
\centering
\begin{tabular}{c|ccccccc|ccc|c|c}
\\[-1em]
TrainData &  \multicolumn{7}{|c|}{\emph{mini}-train} & \multicolumn{3}{|c|}{ImageNet} & iNaturalist\\
Algorithm & PN & PN & CE & MetaB & MetaOpt & CE & S2M2 & PN & CE & MoCo-v2 & CE\\
Architecture & Conv-4 & Res-12 & Res-12 & Res-12 & Res-12 & SE-Res50 & WRN & Res-50 & Res-50 & Res-50 & Res-50 & Average\\ \hline
\emph{mini}-test & \acc{66.6}{1.2} & \acc{73.5}{2.2} & \acc{75.9}{1.6} & \acc{74.7}{2.6} & \acc{74.8}{0.5} & \acc{76.2}{0.2} & \acc{82.5}{1.2} & \macc{82.2}{1.6} & \macc{89.1}{0.5} & \acc{93.7}{2.2} & \acc{69.9}{2.2} & \acc{78.1}{1.1}\\
CUB & \acc{52.0}{2.8} & \acc{57.0}{3.0} & \acc{59.6}{2.3} & \acc{60.1}{2.6} & \acc{60.3}{1.7} & \acc{59.9}{2.2} & \acc{68.5}{2.8} & \acc{65.3}{2.5} & \acc{78.2}{0.4} & \acc{70.0}{6.8} & \zacc{94.7}{0.0} & \acc{66.0}{2.5}\\
Textures & \acc{50.9}{2.3} & \acc{57.1}{4.2} & \acc{63.1}{2.4} & \acc{61.2}{3.7} & \acc{60.2}{1.8} & 
\acc{63.5}{0.6} & \acc{69.3}{2.9} & \acc{61.9}{2.4} & \acc{71.6}{0.8} & \acc{82.8}{0.9} & \acc{63.2}{2.3} & \acc{64.1}{2.0}\\
Traffic Signs & \acc{52.6}{2.1} & \acc{64.8}{2.2} & \acc{65.6}{1.4} & \acc{67.3}{1.5} & \acc{67.1}{4.9} & 
\acc{62.2}{2.9} & \acc{69.6}{3.1} & \acc{64.0}{2.2} & \acc{67.2}{3.5} & \acc{68.4}{8.8} & \acc{60.5}{4.0} &\acc{64.4}{3.3}\\
Aircraft & \acc{32.1}{0.9} & \acc{31.3}{1.6} & \acc{34.7}{1.9} & \acc{34.7}{2.3} & \acc{35.6}{2.4} & \acc{38.2}{2.0} & \acc{40.5}{4.7} & \acc{38.4}{1.7} & \acc{46.6}{2.5} & \acc{34.5}{8.8} & \acc{42.1}{2.5} & \acc{34.0}{2.9}\\

Omniglot & \acc{61.0}{10.0} & \acc{77.6}{7.8} & \acc{86.9}{3.7}& \acc{81.6}{7.9} & \acc{78.0}{9.9} & \acc{89.9}{2.3} & \acc{85.9}{7.4} & \acc{76.4}{2.9} & \acc{88.6}{5.3} & \acc{74.5}{15.8} & \acc{83.8}{9.0} & \acc{80.4}{7.5}\\

VGG Flower & \acc{71.0}{3.1} & \acc{71.1}{5.5} & \acc{79.2}{3.8} & \acc{78.3}{4.5} & \acc{78.4}{3.1} & \acc{83.0}{1.7} & \acc{87.8}{2.5} & \acc{81.4}{2.6} & \acc{89.3}{1.7} & \acc{86.2}{6.3} & \acc{91.9}{1.1} & \acc{81.6}{3.3}\\

MSCOCO & \acc{52.0}{1.2} & \acc{58.2}{1.1} & \acc{59.0}{0.7} & \acc{58.0}{1.6} & \acc{58.4}{0.1} & \acc{57.1}{0.5} & \acc{63.5}{0.1} & \macc{61.3}{0.5} & \macc{64.3}{0.4} & \acc{71.4}{1.4} & \acc{50.4}{1.9} & \acc{59.4}{0.7}\\

Quick Draw & \acc{49.7}{6.5} & \acc{60.2}{5.4} & \acc{67.5}{6.5} & \acc{61.9}{9.0} & \acc{61.0}{6.2} & \acc{69.8}{2.8} & \acc{66.4}{8.2} & \acc{59.8}{6.9} & \acc{70.2}{3.0} & \acc{63.7}{8.3} & \acc{60.8}{6.2} & \acc{62.8}{6.3}\\
Fungi & \acc{48.5}{1.5} & \acc{49.0}{3.7} & \acc{52.2}{3.3} & \acc{51.5}{4.0} & \acc{54.6}{1.9} & \acc{55.2}{0.5} & \acc{61.6}{3.8} & \acc{58.5}{1.3} & \acc{65.1}{1.1} & \acc{60.2}{9.2} & \acc{70.0}{1.8} & \acc{56.9}{2.9}
\\ \hline
Plant Disease & \acc{66.6}{7.8} & \acc{73.3}{7.9} & \acc{80.0}{5.1} & \acc{75.6}{7.6} & \acc{78.6}{4.5} & \acc{83.1}{3.2} & \acc{86.4}{3.5} & \acc{72.5}{8.0} & \acc{84.1}{3.3} & \acc{87.1}{4.7} & \acc{85.6}{4.1} & \acc{79.4}{5.4}\\
ISIC & \acc{38.5}{1.6} & \acc{36.8}{2.9} & \acc{40.4}{1.0} & \acc{38.8}{1.7} & \acc{39.5}{2.3} & \acc{37.7}{3.9} & \acc{40.5}{5.5} & \acc{39.5}{4.0} & \acc{37.8}{3.6} & \acc{43.2}{2.8} & \acc{39.0}{4.3} & \acc{39.2}{3.1}\\
EuroSAT & \acc{63.0}{4.5} & \acc{67.3}{5.5} & \acc{75.7}{2.9} & \acc{71.9}{4.5} & \acc{72.8}{5.8} & \acc{75.7}{1.6} & \acc{81.2}{2.9} & \acc{72.5}{6.1} & \acc{78.4}{2.2} & \acc{83.5}{2.7} & \acc{73.5}{3.7} & \acc{74.1}{3.9}\\
ChestX & \acc{22.9}{0.2} & \acc{23.0}{0.5} & \acc{24.1}{0.3} & \acc{23.5}{0.5} & \acc{24.5}{0.4} & \acc{23.6}{0.2} & \acc{24.2}{0.9} & \acc{23.2}{0.3} & \acc{24.2}{0.8} & \acc{25.4}{0.9} & \acc{23.9}{0.1} & \acc{23.9}{0.5}\\ \hline

Real & \acc{67.0}{1.8} & \acc{72.2}{3.1} & \acc{76.3}{1.6} & \acc{75.0}{2.6} & \acc{75.8}{1.1} & \acc{76.7}{0.5} & \acc{81.7}{1.9} & \acc{80.5}{0.4} & \macc{87.1}{0.1} & \acc{88.8}{2.1} & \acc{72.9}{1.7}& \acc{77.6}{1.5}\\

Sketch & \acc{42.6}{2.9} & \acc{45.3}{5.0} & \acc{51.1}{2.6} & \acc{50.2}{3.4} & \acc{50.6}{2.0} & \acc{50.9}{2.4} & \acc{56.8}{4.1} & \acc{53.1}{1.5} & \acc{63.2}{2.5} & \acc{63.9}{5.8} & \acc{51.9}{1.4} & \acc{52.7}{3.1}\\

Infograph & \acc{33.1}{2.8} & \acc{34.7}{3.7} & \acc{35.3}{2.8} & \acc{35.0}{4.0} & \acc{38.3}{1.1} & \acc{38.2}{2.5} & \acc{39.2}{3.7} & \acc{39.7}{2.7} & \acc{42.3}{4.2} & \acc{41.6}{7.1} & \acc{38.5}{2.9} & \acc{37.8}{3.4}\\

Painting & \acc{49.0}{1.7} & \acc{52.5}{3.3} & \acc{56.1}{1.4} & \acc{55.1}{2.5} & \acc{56.2}{0.7} & \acc{59.3}{0.8} & \acc{64.2}{1.8} & \macc{61.8}{0.2} & \acc{69.6}{0.5} & \acc{76.5}{3.0} & \acc{56.4}{1.9}& \acc{59.7}{1.6}\\

Clipart & \acc{47.5}{3.6} & \acc{49.7}{4.8} & \acc{55.5}{3.1} & \acc{54.9}{4.3} & \acc{56.4}{2.6} & \acc{60.4}{2.3} & \acc{63.0}{4.3} & \acc{60.9}{1.8} & \acc{72.7}{1.5} & \acc{67.4}{7.0} & \acc{58.4}{2.2} & \acc{58.8}{3.4}\\

\end{tabular}
\vskip -0.2in
\end{table*}

In order to evaluate on different types and degrees of task distribution shift, in the following experiments, we select a broad range of datasets for $\mathcal{D}^{train}$ and $\mathcal{D}^{test}$. For $\mathcal{D}^{train}$, we choose (1) the train split of \emph{mini}ImageNet~\cite{matchingnet} that contains 38400 images from 64 classes; (2) the train split of ImageNet 1K~\cite{ImageNet} containing more than 1M images from 1000 classes; (3) train+val split of iNaturalist 2018~\cite{iNaturalist}, a fine-grained dataset of plants and animals with a total of more than 450000 training images from 8142 classes. For $\mathcal{D}^{test}$, we choose the test split of \emph{mini}ImageNet, and all evaluation datasets of Meta-dataset~\cite{metadataset}, BSCD-FSL benchmark~\cite{BSCD} and DomainNet~\cite{DomainNet}, for a total of 19 datasets, to ensure adequate coverage of different categories, domains and task granularities.

\subsection{Universal Performance Gains from a Test-time Simple Feature Transformation}
Let $\xx\in\mathbb{R}^D$ denote an image  and $f_\theta(\cdot)$ a feature extractor learned from the training set $\mathcal{D}^{train}$. The $l$-th channel of the feature $\zz=f_\theta(\xx)\in\mathbb{R}^d$ is defined as the $l$-th dimension of $\zz$, i.e., $\{z_i\}_{i=1}^d$ is the set of all $d$ channels. The simple transformation function $\phi_k: [0,+\infty)\rightarrow[0,+\infty)$ that we consider is defined as
\begin{equation}
    \label{simple_transformation}
    \phi_k(\lambda) =\begin{cases} \frac{1}{ln^k(\frac{1}{\lambda}+1)}, &\lambda>0\\
    0, &\lambda=0
    \end{cases}
\end{equation}
where $k>0$ is a hyperparameter. At test time, we simply use this function to transform each channel of image features, i.e.,
\begin{equation}
    \phi_k(\zz)=(\phi_k(z_1), ..., \phi_k(z_d)).
\end{equation}
When applying this transformation, we transform all image features in the target classification task regardless of whether they are in the support set or query set; any subsequent operation keeps unchanged. Note that this function can only be applied to features taking non-negative values, common in most convolutional neural networks using ReLU as the activation function. We discuss one variant of the function dealing with features having negative values (e.g.,  networks with Leaky ReLU) in Appendix \ref{secattempt}. A plot of this function with various choices of $k$ is shown in Figure~\ref{transformation_plot}. 

Table~\ref{performance} shows the performance gains brought by this transformation on 5-way 5-shot FSL tasks. We test the transformation on representations trained with different algorithms, including (1) the conventional training methods including cross-entropy (CE) and the S2M2 algorithm~\cite{S2M2}, (2) meta-learning methods including ProtoNet~\cite{ProtoNet} (PN), Meta-baseline~\cite{metabaseline}  and MetaOpt~\cite{metaopt}, and (3) MoCo-v2~\cite{MoCo}, a unsupervised contrastive learning method. We test these methods with various backbone networks: Conv-4~\cite{matchingnet} and  four variants of ResNet~\cite{Resnet} including ResNet-12~\cite{tadam}, WRN-28-10~\cite{WRN}, ResNet-50 and SE-ResNet50~\cite{SENet}. We replace Leaky ReLU with ReLU in ResNet-12 to obtain positive features (cause of performance degradation in Table~\ref{performance}). At test-time, we use the  Nearest-Centroid Classifier~\cite{ProtoNet} for CE, linear probing for S2M2 and MoCo-v2, and for meta-learning algorithms we use their own test-time classifier. Training and evaluation details can be found in Appendix~\ref{secdetails}.

\begin{figure}[t]
\vskip 0.2in
\centering
\centerline{\includegraphics[width=0.6\columnwidth]{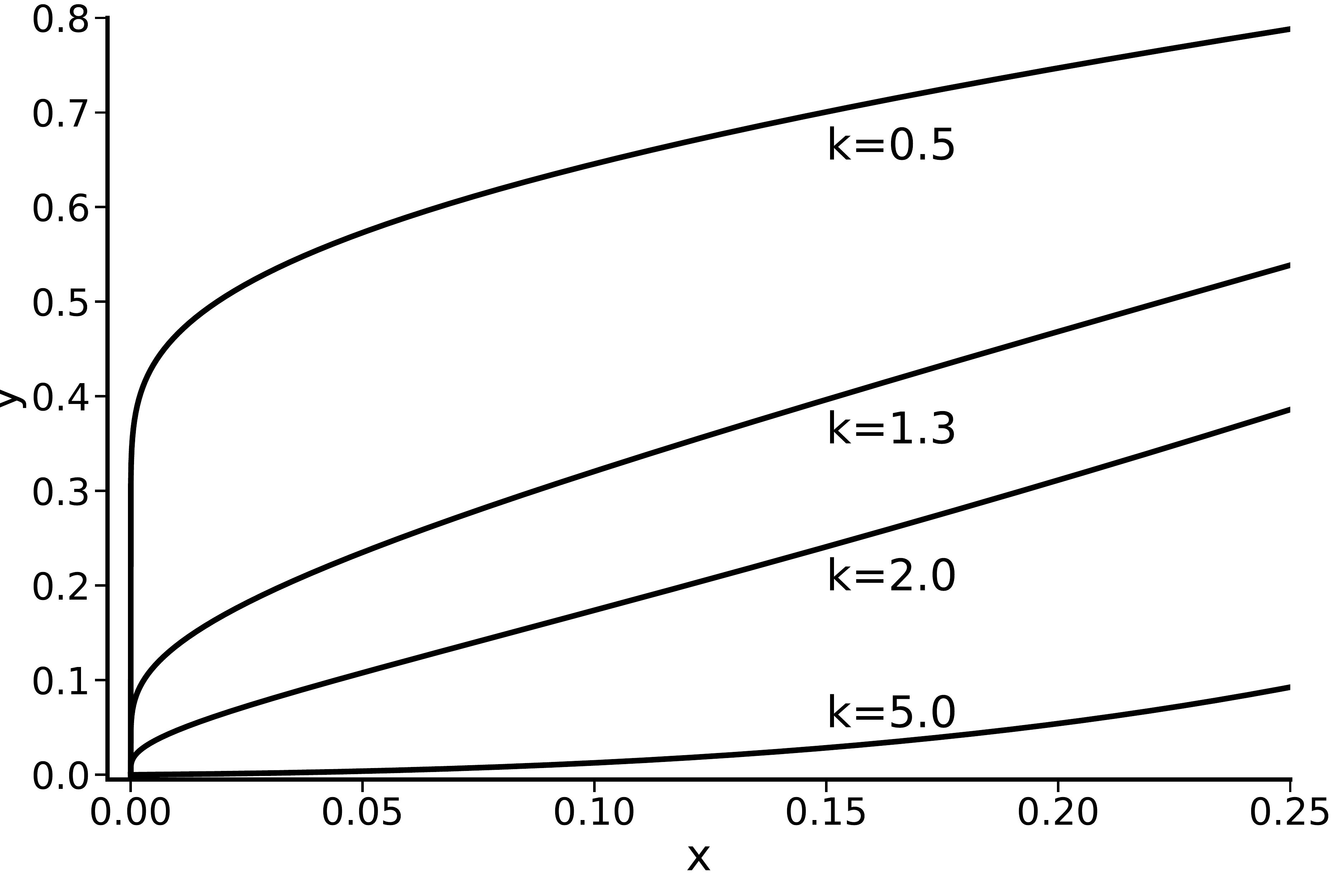}}
\caption{The simple transformation function $\phi_k$ with various choices of $k$.}
\label{transformation_plot}

\vskip -0.2in
\end{figure}
\begin{figure}[t]
\centering
\centerline{\includegraphics[width=0.8\columnwidth]{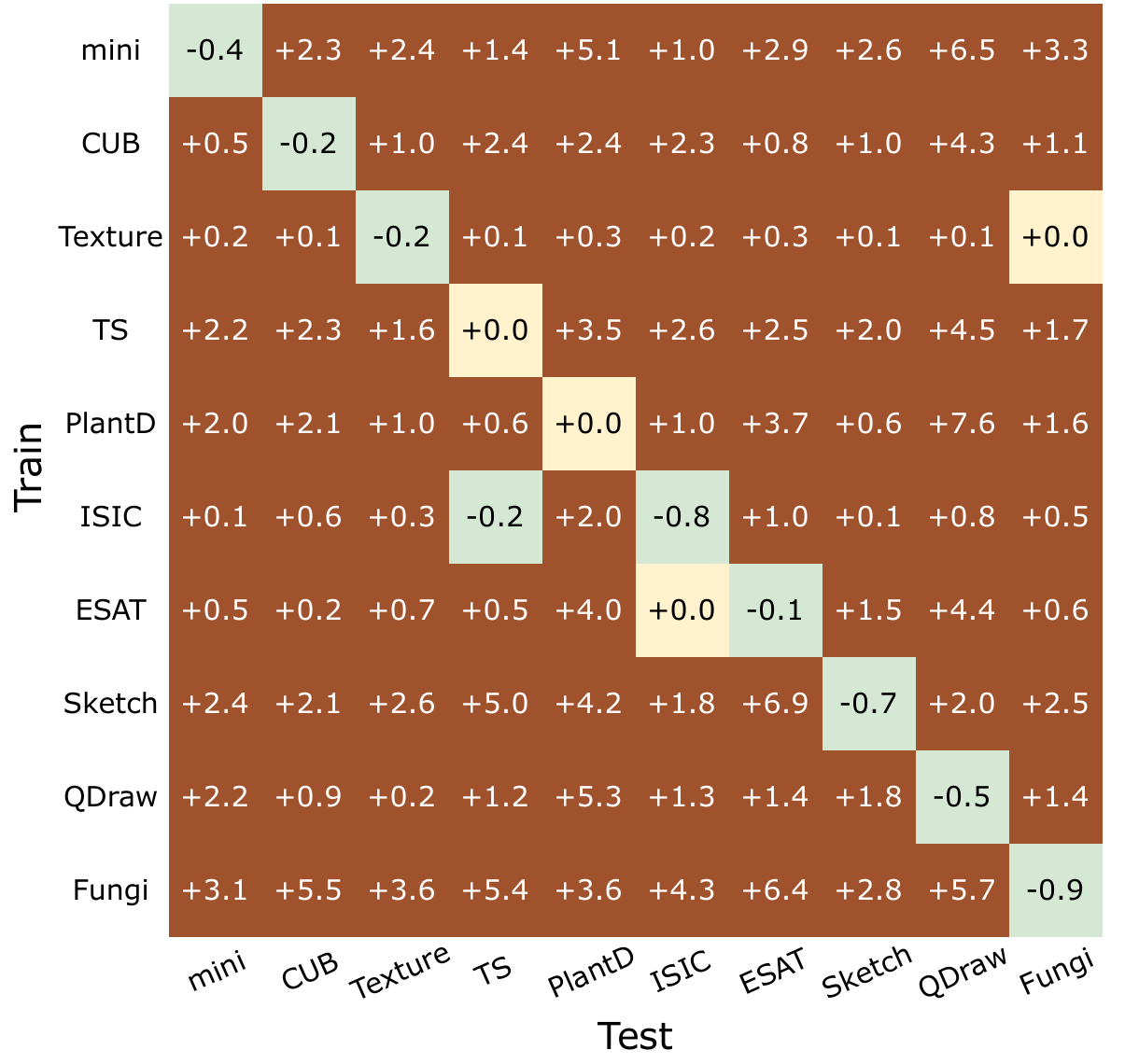}}
\caption{In-distribution~(diagonal) and out-of-distribution~(off-diagonal) performance gains of the simple channel-wise transformation on representations trained with CE. When the test-time dataset equals the training dataset~(diagonal), the categories of images remain the same but test-time images are unseen during training~(as in conventional classification).}
\label{pairwise}

\vskip -0.2in
\end{figure}

The result shows how this simple feature transformation substantially improves few-shot learning across various algorithms, datasets and architectural choices, with a fixed hyperparameter $k=1.3$~(We show how performance varies with different choices of $k$ in Appendix \ref{seceffectk}). The only exception happens when the test-time task distribution is very similar to a subset of training distribution: training the supervised models on ImageNet and testing on \emph{mini}ImageNet, MSCOCO, Real or Painting\footnote{There are a lot of painting-style images in ImageNet. Contrastive learning~(MoCo) can be seen as an infinitely fine-grained classification task, thus having a relatively large different task distribution shift from training to testing, even on the same dataset.}, or training on iNaturalist and testing on CUB. Is this transformation useful only if there exists task distribution shift between training and testing? To verify this, we train a CE model on each of ten datasets and test on 5-way 5-shot tasks sampled from each dataset. When testing on the training dataset, we evaluate on images not included in training. The results shown in Figure~\ref{pairwise} clearly give evidence that the transformation is beneficial only to few-shot classification with task distribution shift---the performance is improved only when test-time task distribution deviates from training, and this distribution shift includes domain shift~(e.g., from Sketch to QuickDraw), category shift (e.g., from Plant Disease to Fungi) and granularity shift (e.g., from iNaturalist to Plant Disease in Table \ref{performance}).

\section{The Channel Bias Problem }
\label{sec3}
In this section, we analyze the simple transformation, which leads us to discover the channel bias problem of visual representations.
Given the transformation function described in Eq.(\ref{simple_transformation}), it can be first noticed that
\begin{gather}
\label{property}
\phi_k'(\lambda)>0, \lim_{\lambda\to 0^{+}}\phi_k'(\lambda)=+\infty,\notag\\
\exists t>0, \quad s.t. \quad \forall \lambda\in(0,t), \phi_k''(\lambda)<0,
\end{gather}
where $t$ is a large value for most $k$, relative to the magnitudes of almost all channels (e.g., when $k=1.3$, $t\approx 0.344$, while most channel values are less than $0.3$). The positiveness of the derivative ensures that the relative relationship between channels will not change,  while the negative second derivative narrows their gaps; the infinite derivative near zero pulls up small channels by a large margin, i.e., $\lim_{\lambda\to 0^+}\frac{\phi_k(\lambda)}{\lambda}=+\infty$. See Appendix \ref{ingredient} for the necessity of all these properties. A clear impact of these properties on features is to make channel distribution smooth: suppress channels with high magnitude, and largely amplify channels with low magnitude. This phenomenon is clearly shown in Figure \ref{feature_visulization}, where we plot mean magnitudes of all 640 feature channels on \emph{mini}ImageNet and PlantDisease, with red ones being the original distribution, blue ones being the transformed distribution. The transformed distribution becomes more uniform.

\begin{figure}[t]
\centering
\centerline{\includegraphics[width=1.0\columnwidth]{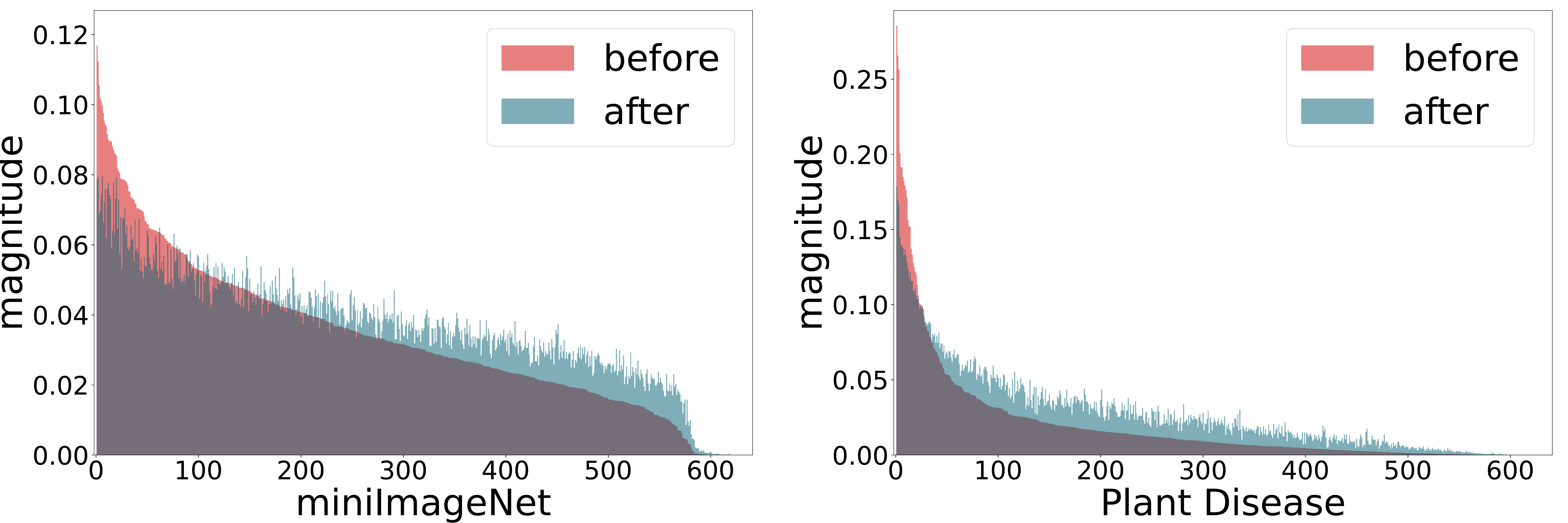}}
\caption{\textbf{Mean magnitudes of feature channels before and after applying the simple transformation.} The feature extractor is trained using PN on the training set of \emph{mini}ImageNet. Left: test set of \emph{mini}ImageNet. Right: The Plant Disease dataset. The change of relative magnitude is due to different variances of channels.}
\label{feature_visulization}

\vskip -0.2in
\end{figure}
Intuitively, different channels have high responses to different features, and a larger Mean Magnitude of a Channel (MMC) implies that the model puts more emphasis on this channel, hoping that this channel is more important for the task at hand. Combining the analysis above with previous experiment results, we conjecture that the MMC of representations should change when testing on novel tasks with a shift in distribution. This meets our intuition that different tasks are likely to be characterized by distinct discriminative features, as shown in the examples of Figure \ref{example_channelchange}.

\subsection{Deriving the Oracle MMC of Any Binary Task}

We now wonder how much the  MMC estimated by neural networks in a task deviates from the best MMC or \emph{channel importance} of that task. To achieve this goal, we first derive the optimal MMC for any classification task by multiplying a positive constant to each channel of features, given that we know the first-order and second-order statistics of features. For convenience, we consider the binary classification problem. Specifically, let $\mathcal{D}_1$, $\mathcal{D}_2$ denote probability distributions of two classes over feature space $\mathcal{Z}\subset [0,+\infty)^d$, and $\zz_1\sim\mathcal{D}_1$, $\zz_2\sim\mathcal{D}_2$ denote samples of each class. Let $\uu_1, \uu_2$ and $\SS_1, \SS_2$ denote their means and covariance matrices, respectively. We assume that the channels of features are uncorrelated with each other, i.e., there exist $\ss_1, \ss_2\in[0,+\infty)^d$, \emph{s.t.}  $\SS_1=\mathrm{diag}(\ss_1)$, $\SS_2=\mathrm{diag}(\ss_2)$. The original MMC of the binary task is defined as $\oo^o=(\uu_1+\uu_2)/2$.
We assume that the MMC after adjustment is $\oo\in [0,+\infty)^d$. Let $\hz_1$, $\hz_2$ denote standadized version of $\zz_1$, $\zz_2$ that have unit MMC, i.e., $\widetilde{z}_{1,l}=z_{1,l}/\omega^o_{l},\widetilde{z}_{2,l}=z_{2,l}/\omega^o_{l}\Rightarrow(\wu_{1,l}+\wu_{2,l})/2=1, \forall l\in [d]$ ($[d]$ is equivalent to $\{1,2,...,d\}$). A simple approach to adjust MMC to $\oo$ is to transform features to $\oo\odot \hz_1$ and $\oo\odot\hz_2$ respectively, where $\odot$ denotes the hadamard product. Here, we consider a metric-based classifier. Specifically, a standardized feature $\bm{\hz}$ is classified as the first class if $||\oo\odot (\hz-\hu_1)||_2<||\oo\odot(\hz-\hu_2)||_2$ and otherwise the second class. This classifier is actually the Nearest-Centroid Classifier (NCC)~\cite{ProtoNet} with accurate centroids. Assume that two classes of images are sampled equal times, then the expected misclassification rate of this classifier is
\begin{align}
\mathcal{R} = 
        \frac{1}{2}&[\mathbb{P}_{\zz_1\sim \mathcal{D}_1}(||\oo\odot (\hz_1-\hu_1)||_2>||\oo\odot (\hz_1-\hu_2)||_2)\nonumber\\
        +&\mathbb{P}_{\zz_2\sim \mathcal{D}_2}(||\oo\odot (\hz_2-\hu_2)||_2>||\oo\odot (\hz_2-\hu_1)||_2)].
\end{align}
The following theorem gives an upper bound of the misclassification rate and further gives the \emph{oracle} MMC of any given task.
\begin{proposition}
\label{thm:bigtheorem}
Assume that $\mu_{1,l}\neq\mu_{2,l}$ and $\sigma_{1,l}+\sigma_{2,l}>0$ hold for any $l\in[d]$, then we have
\begin{equation}
    \begin{split}
\mathcal{R}\leq\frac{8\sum_{l=1}^d\omega_l^4(\ws_{1,l}+\ws_{2,l})^2}{(\sum_{l=1}^d\omega_l^2(\wu_{1,l}-\wu_{2,l})^2)^2}.
    \end{split}
\end{equation}
To minimize this upper bound, the adjusted oracle MMC of each channel $\omega_l$ should satisfy:
\begin{equation}
\label{oracle_adjustment}
\omega_l \propto \frac{|\mu_{1,l}-\mu_{2,l}|}{\sigma_{1,l}+\sigma_{2,l}}.
\end{equation}
\end{proposition}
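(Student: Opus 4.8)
The plan is to reduce the Nearest-Centroid decision error to a one-dimensional mean-zero tail event, control that event by Chebyshev's inequality, and then minimize the resulting closed-form bound by Cauchy--Schwarz. First I would rewrite the misclassification event: squaring the (non-negative) distances and using $\|\oo\odot v\|_2^2=\sum_l\omega_l^2 v_l^2$, the event $\|\oo\odot(\hz_1-\hu_1)\|_2>\|\oo\odot(\hz_1-\hu_2)\|_2$ is, via the elementary identity $(\wz_{1,l}-\wu_{1,l})^2-(\wz_{1,l}-\wu_{2,l})^2=2(\wu_{2,l}-\wu_{1,l})(\wz_{1,l}-\wu_{1,l})-(\wu_{1,l}-\wu_{2,l})^2$, equivalent to $X_1>c$, where $X_1:=\sum_l\omega_l^2(\wu_{2,l}-\wu_{1,l})(\wz_{1,l}-\wu_{1,l})$ and $c:=\tfrac12\sum_l\omega_l^2(\wu_{1,l}-\wu_{2,l})^2$. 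Then $\mathbb{E}[X_1]=0$, and since the channels are uncorrelated, $\mathrm{Var}(X_1)=\sum_l\omega_l^4(\wu_{1,l}-\wu_{2,l})^2\ws_{1,l}^2$, where $\ws_{1,l}$ is the standard deviation of $\wz_{1,l}$. The hypotheses $\mu_{1,l}\neq\mu_{2,l}$ ensure $c>0$ (for $\oo\neq\bm 0$; otherwise the classifier and the bound are vacuous).

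Second, I would apply the tail bound. Because features are non-negative and the standardization forces $\wu_{1,l}+\wu_{2,l}=2$ with $\wu_{i,l}\ge0$, we have $(\wu_{1,l}-\wu_{2,l})^2\le4$, hence $\mathrm{Var}(X_1)\le4\sum_l\omega_l^4\ws_{1,l}^2$. Chebyshev's inequality (equivalently, Markov applied to $X_1^2$) gives $\mathbb{P}_{\zz_1\sim\mathcal{D}_1}(X_1>c)\le\mathrm{Var}(X_1)/c^2\le 16\sum_l\omega_l^4\ws_{1,l}^2\big/\big(\sum_l\omega_l^2(\wu_{1,l}-\wu_{2,l})^2\big)^2$, and symmetrically for the $\mathcal{D}_2$ term with $\ws_{2,l}$. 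Averaging the two and using $\ws_{1,l}^2+\ws_{2,l}^2\le(\ws_{1,l}+\ws_{2,l})^2$ yields the stated bound on $\mathcal{R}$.

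Third, I would minimize that bound over $\oo$. Writing $a_l=\omega_l^2\ge0$, the bound equals $8\,g(a)$ with $g(a)=\sum_l a_l^2(\ws_{1,l}+\ws_{2,l})^2\big/\big(\sum_l a_l(\wu_{1,l}-\wu_{2,l})^2\big)^2$, which is invariant under $a\mapsto ta$; so it suffices to minimize the numerator subject to $\sum_l a_l(\wu_{1,l}-\wu_{2,l})^2=1$. Applying Cauchy--Schwarz to $u_l=a_l(\ws_{1,l}+\ws_{2,l})$ and $v_l=(\wu_{1,l}-\wu_{2,l})^2/(\ws_{1,l}+\ws_{2,l})$ (well defined since $\sigma_{1,l}+\sigma_{2,l}>0$), $1=\big(\sum_l u_lv_l\big)^2\le\big(\sum_l u_l^2\big)\big(\sum_l v_l^2\big)$ with equality iff $u_l\propto v_l$, i.e. $a_l\propto(\wu_{1,l}-\wu_{2,l})^2/(\ws_{1,l}+\ws_{2,l})^2$. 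Hence $\omega_l\propto|\wu_{1,l}-\wu_{2,l}|/(\ws_{1,l}+\ws_{2,l})$; substituting $\wu_{i,l}=\mu_{i,l}/\omega^o_l$ and $\ws_{i,l}=\sigma_{i,l}/\omega^o_l$ cancels $\omega^o_l$ and recovers $\omega_l\propto|\mu_{1,l}-\mu_{2,l}|/(\sigma_{1,l}+\sigma_{2,l})$.

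I expect the main obstacle to be conceptual rather than computational: identifying the right mean-zero scalar statistic $X_1$ so that a second-moment bound applies channel-by-channel, and realizing that one must discard the factor $(\wu_{1,l}-\wu_{2,l})^2\le4$ inside $\mathrm{Var}(X_1)$ \emph{before} optimizing. Keeping that factor leads, through the same Cauchy--Schwarz step, to an inverse-variance weighting $\omega_l\propto\omega^o_l/(\sigma_{1,l}^2+\sigma_{2,l}^2)^{1/2}$ rather than the Fisher-type ratio in Eq.~(\ref{oracle_adjustment}); the coarser variance bound is precisely what makes the stated oracle MMC emerge. Everything else --- non-negativity to bound the mean gap, uncorrelatedness to diagonalize the variance, scale-invariance to drop the normalization of $\oo$, and the equality case of Cauchy--Schwarz --- is routine.
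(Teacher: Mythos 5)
Your proof is correct, and the first half follows essentially the same path as the paper's: the same algebraic rewriting of the squared-distance event into a linear statistic in $\wz_{1,l}$, a one-sided second-moment tail bound, and then the same two relaxations ($(\wu_{1,l}-\wu_{2,l})^2\le(\wu_{1,l}+\wu_{2,l})^2=4$ from non-negativity plus standardization, and $\ws_{1,l}^2+\ws_{2,l}^2\le(\ws_{1,l}+\ws_{2,l})^2$). The paper invokes Cantelli's inequality rather than Chebyshev, but since it immediately relaxes $\mathrm{Var}/(\mathrm{Var}+c^2)$ to $\mathrm{Var}/c^2$, the two give the identical displayed bound, so nothing is lost or gained there. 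Where you genuinely diverge is the optimization step: the paper proves a standalone lemma that $\sum_l b_l x_l^2/(\sum_l a_l x_l)^2$ is minimized at $x_l\propto a_l/b_l$, by induction on the dimension with an explicit analysis of the boundary faces $\{x_i=0\}$ and of the critical-point equations. Your Cauchy--Schwarz argument with $u_l=a_l(\ws_{1,l}+\ws_{2,l})$ and $v_l=(\wu_{1,l}-\wu_{2,l})^2/(\ws_{1,l}+\ws_{2,l})$ reaches the same minimizer in three lines, handles the equality case cleanly, and automatically lands in the interior of the non-negative orthant, so the boundary discussion becomes unnecessary; this is a tidier and arguably preferable derivation. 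Your closing remark about the order of relaxations is also accurate: the factor $(\wu_{1,l}-\wu_{2,l})^2$ must be discarded from the variance before optimizing, exactly as the paper does, or the extremizer changes.
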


\begin{figure}[t]

\centering
\centerline{\includegraphics[width=0.9\linewidth]{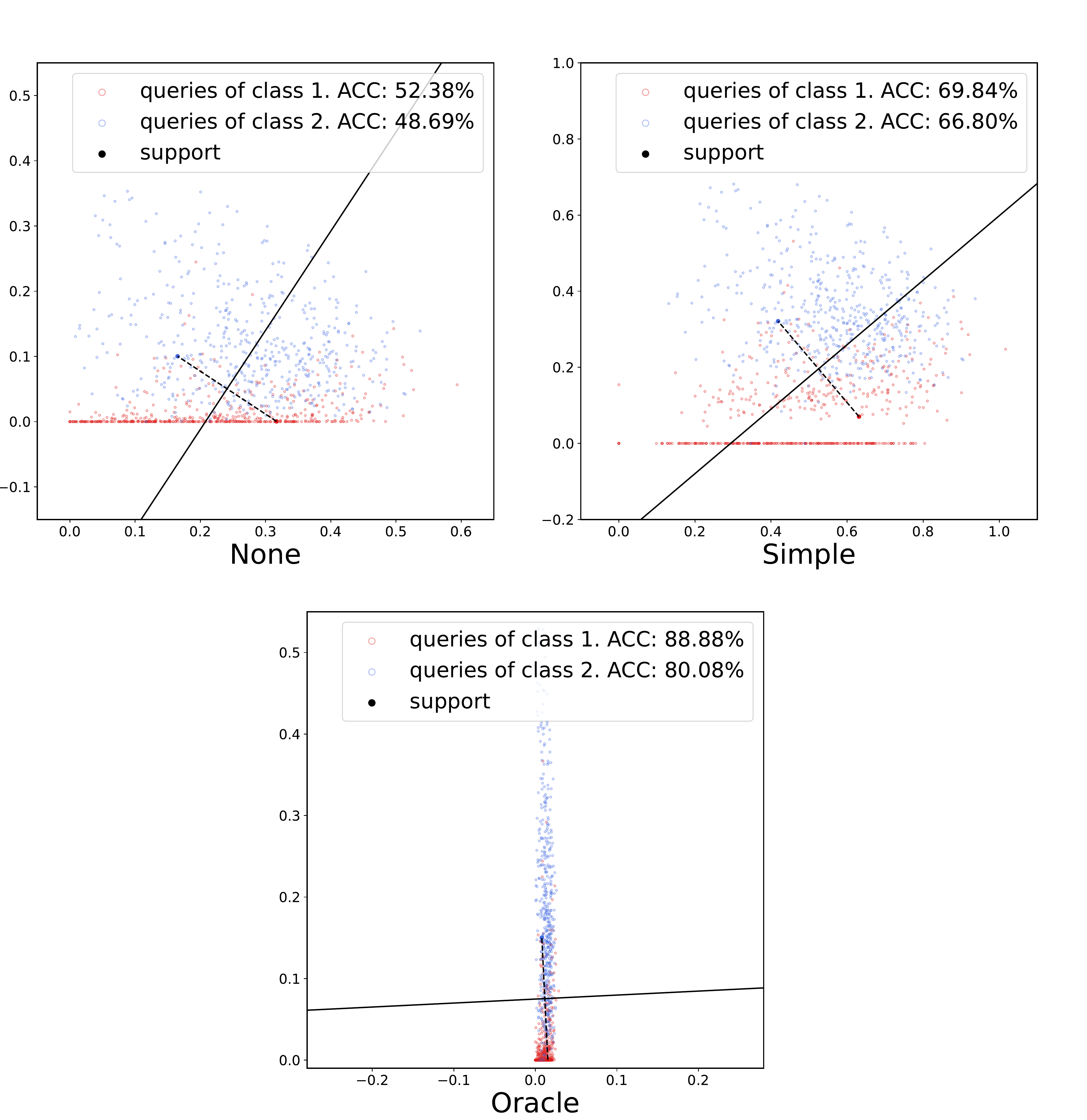}}
\caption{\textbf{Visualization of two channels of image features in two classes of Plant Disease.} The feature extractor is trained using PN on \emph{mini}ImageNet. We visualize a one-shot task with only two channels available for classification. The plot with ``None'' shows the original channels. The plots with ``Simple'' and ``Oracle'' show channels adjusted by the simple and oracle transformation. The per-class accuracy is calculated as the proportion of samples correctly classified by the classification boundary in each class.}

\vskip -0.15in
\label{gaussion_example}
\end{figure}

\begin{figure*}[t]

\centering
\centerline{\includegraphics[width=1.0\linewidth]{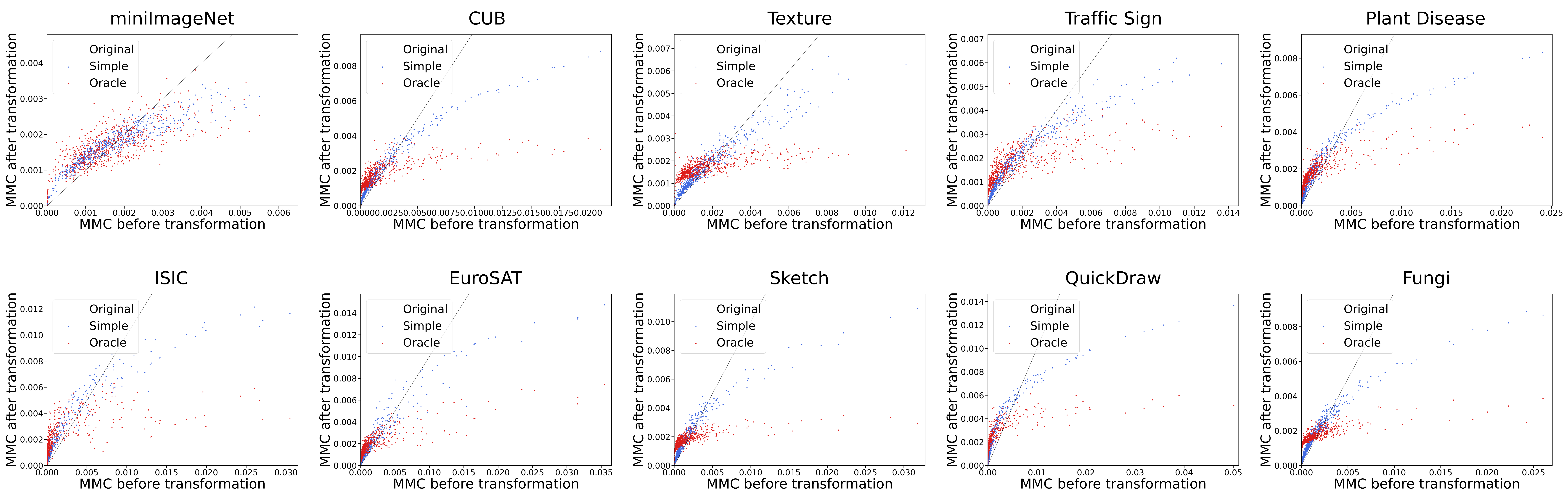}}
\caption{\textbf{Visualization of MMC of ten datasets  $\oo_D$ before and after the use of simple and oracle transformation.} In each plot, a point represents a channel, and the x-axis and y-axis represent the MMC before and after transformation respectively, averaged over all possible binary tasks in the corresponding dataset. For comparison, we also plot the line $y=x$ representing the ``None'' scenario where none of the transformations are applied to features. The feature extractor is trained using PN on \emph{mini}ImageNet.}

\vskip -0.15in
\label{visiualize_optimal}
\end{figure*}

\begin{table*}[t]
\setlength\tabcolsep{4pt}
\footnotesize
\caption{The performance gains of the oracle MMC on 5-shot binary classification tasks on various datasets. The derived MMC improves the few-shot performance of both metric and non-metric test-time methods: Nearest-Centroid Classifier (NCC) and Linear Classifier (LC).}
\label{Optimal}
\centering
\begin{tabular}{ccc|cccccccccc|c}
\\[-1em]
\\
 Algorithm & Classifier & Transformation & mini & CUB & Texture & TS & PlantD & ISIC & ESAT & Sketch & QDraw & Fungi & Avg
\\
\multirow{3}{*}{PN}& \multirow{3}{*}{NCC} &None & 90.5 & 80.6 & 80.6 & 85.1 & 89.2 & 65.7 & 86.5 & 71.9 & 82.4 & 74.6 & 80.7
\\
& & Simple& 91.3 & 82.4 & 83.1 & 85.8 & 93.0 & 68.6 & 89.2 & 75.2 & 85.1 & 77.2 & 83.1
\\
& &Oracle  & \textbf{93.1} & \textbf{88.7} & \textbf{87.2} & \textbf{92.4} & \textbf{95.6} & \textbf{69.1} & \textbf{91.5} & \textbf{81.2} & \textbf{89.4} & \textbf{88.4} & \textbf{87.7}
\\ \hline
\multirow{3}{*}{S2M2}&\multirow{3}{*}{LC}&None & 94.0 & 87.1 & 85.7 & 88.7 & 95.0 & 68.7 & 93.5 & 78.7 & 85.5 & 82.8 & 86.0
\\
& &Simple & 94.4 & 88.3 & 87.3 & 91.2 & 96.4 & 72.2 & 93.8 & 81.0 & 89.2 & 84.5 & 87.8
\\
& &Oracle & \textbf{96.3} & \textbf{94.0} & \textbf{90.7} & \textbf{96.1} & \textbf{98.3} & \textbf{72.6} & \textbf{95.2} & \textbf{87.0} & \textbf{93.0} & \textbf{93.3} & \textbf{91.7}
\end{tabular}
\end{table*}

\begin{table*}[t]
\setlength\tabcolsep{2.0pt}
\footnotesize
\caption{Three levels of distance  between different MMCs or $l_1$-normalized image features. The first row shows the dataset-level distance between the original MMC of the training
set (mini-train) and each test set; the second row shows the dataset-level distance between the original and oracle MMCs on each dataset; rows 3-6 show the task-level and image-level distances (both amplified by $10^6$ times) between MMCs obtained by simple and oracle transformation or between original MMCs (None) and the MMCs obtained by oracle transformation. The feature extractor is trained using PN on the training set of \emph{mini}ImageNet (mini-train).}
\label{distance}
\centering
\begin{tabular}{ccc|ccccccccccc}
\\[-1em]
\hline
& & &\multicolumn{11}{c}{Test dataset}\\
 Level & Compared dataset & Trans. & mini-train & mini-test & CUB & Texture & TS & PlantD & ISIC & ESAT & Sketch & QDraw & Fungi\\\hline
 \multirow{2}{*}{Dataset} & Train \emph{v.s.} Test & None & - & 0.18 & 1.56 & 0.88 & 1.13 & 1.54 & 2.28 & 1.30 & 1.01 & 1.58 & 0.79\\
 &Test & None \emph{v.s.} Oracle & 0.42 & 0.72 & 3.60 & 1.78 & 4.04 & 3.92 & 3.47 & 5.62 & 4.26 & 3.37 & 3.87\\\hline
 \multirow{2}{*}{Task}&\multirow{2}{*}{Test} & None \emph{v.s.} Oracle & 3.60 & 4.04 & 3.53 & 4.13 & 3.68 & 4.09 & 3.15 & 4.24 & 5.31 & 4.22 & 3.38\\
 & & Simple \emph{v.s.} Oracle & 3.54 & 3.80 & 2.93 & 3.62 & 3.22 & 3.65 & 2.59 & 3.71 & 4.35 & 3.18 & 2.78\\\hline
 \multirow{2}{*}{Image}&\multirow{2}{*}{Test} & None \emph{v.s.} Oracle & 10.52 & 10.65 & 11.53 & 25.20 & 9.88 & 9.75 & 13.04 & 16.33 & 27.36 & 13.46 & 11.39\\
 & & Simple \emph{v.s.} Oracle & 7.98 & 8.14 & 8.69 & 16.74 & 7.06 & 7.34 & 8.43 & 11.22 & 19.50 & 9.32 & 8.71\\\hline
\end{tabular}
\end{table*}

Proofs are given in Appendix \ref{all_proof}. We here use the word ``oracle'' because it is derived using the class statistics of the target dataset, which is not available in few-shot tasks. This derived MMC has an intuitive explanation: if the difference between the means of features from two classes is large but the variances of features from two classes are both small, the single channel can better distinguish the two classes and thus should be emphasized in the classification task. 
In fact, if we further assume $x_{1,l}$ and $x_{2,l}$ are Gaussian-distributed and consider only using the $l$-th channel for classification, then the misclassification error for the $i$-th class ($i=1,2$) is a strictly monotonically decreasing function of $|\mu_{1,l}-\mu_{2,l}|/\sigma_{i,l}$. 








Table~\ref{Optimal} shows the performance improvement over the simple feature transformation when adjusting the MMC to derived oracle one in each of the real few-shot binary classification tasks. For every sampled binary task in a dataset, we calculate the oracle adjustment based on Eq.~(\ref{oracle_adjustment}); see Appendix \ref{MMC_calculation} for details. The oracle MMC improves performance on all datasets, and always by a large margin. Note that although the oracle MMC is derived using a metric-based classifier, it can also help a linear classifier to boost performance, which will be further discussed in Section 4. The large performance gains using the derived channel importance indicate that the MMC of features on new test-time few-shot task indeed has a large mismatch with ground-truth channel importance. 

To obtain a better understanding, in Figure \ref{gaussion_example}, we visualize image representations of two classes when transferred from \emph{mini}ImageNet to Plant Disease. The two exhibited classes are apples with Apple Scab and Black Rot diseases, respectively. We visualize 2 out of 640 channels in the features, shown as the $x$-axis and $y$-axis in the figure. We select these channels by first selecting a channel that requires a large suppression of MMC ($x$-axis), and then a channel that requires a large increase ($y$-axis). As seen, the $x$-axis channel has a large intra-class variance (
the variances are 0.13 and 0.11 in two classes on the $x$-axis channel, compared to 0.03 and 0.08 on the $y$-axis channel) and a small class mean difference (about 0.03, compared to 0.13 on the $y$-axis channel), so it is hard to distinguish two classes through this channel. By adjusting the mean magnitude of this channel, the simple transformation and oracle adjustment decrease the intra-class variance of the $x$-axis channel, and so decrease its influence on classification. Similarly, the $y$-axis channel can better distinguish two classes due to its relatively larger class mean difference and smaller intra-class variance, so the influence of the $y$-axis channel should be strengthened.




\subsection{Analysis of Channel Importance}
Next, we take the derived oracle MMC as an approximation of the ground-truth channel importance, and use it to observe how the simple transformation works, as well as how much the channel emphasis of neural networks deviates from the ground-truth channel importance of tasks in each test-time dataset. We define MMC of a dataset $D$ as the average $l_1$-normalized MMCs over all possible binary tasks in that dataset. Specifically, suppose in one dataset $D$ there are $C$ classes, and let $\oo_{ij}$ denote the MMC in the binary task discriminating the $i$-th and $j$-th class. $\overline{\oo_{ij}}=\oo_{ij}/||\oo_{ij}||_1$ normalizes the MMC, such that the $l$-th component of the vector $\overline{\oo_{ij}}$ represents the percentage of channel emphasis on the $l$-th channel. Then the MMC of $D$ is defined as $\oo_D=\overline{\sum_{1\leq i<j\leq C}\overline{\oo_{ij}}}$,
which gives average percentages of channel emphasis over all binary tasks. We visualize the oracle MMC, compared with MMC adjusted by the simple transformation and the original MMC of each dataset in Figure \ref{visiualize_optimal}. A point in each figure represents a channel of the image features, with $x$ and $y$ axis being its MMC of that dataset before and after transformation, respectively. To obtain a more precise understanding, we also want to quantitatively measure difference between different MMCs or image features. To achieve this, given a distance measure $d(\cdot, \cdot)$ (not necessarily a metric), we define three levels of distances: (1) dataset-level distance $d(\oo_{D_a}, \oo_{D_b})$ that measures the distance between MMCs of two datasets (or the same dataset with different transformations); (2) in-dataset task-level distance $\frac{C(C+1)}{2}\sum_{1\leq i<j \leq C}d(\overline{\oo_{ij}^{a}},\overline{\oo_{ij}^{b}})$ that measures average distance between MMCs of all tasks 
from a dataset obtained by different feature transformations, and (3) image-level distance $\frac{1}{|D|}\sum_{i=1}^{|D|}d(\overline{\zz_a^i},\overline{\zz_b^i})$, a more fine-grained one that measures average distance between all $l_1$-normalized image features $\overline{\zz_a^i},\overline{\zz_b^i}$ of dataset $D$ under different feature transformations. For dataset-level distance, we adopt the normalized mean square difference $d(\xx,\bm{y})=\frac{1}{d}\sum_{l=1}^d(x_l-y_l)^2/x_l^2$, since it treats each channel equally w.r.t. to the scale and is sensitive to high deviation. However, for task-level and image-level distance, we choose the mean square difference  $d(\xx,\bm{y})=\frac{1}{d}\sum_{l=1}^d(x_l-y_l)^2$ instead to avoid high variations caused by a single task or image feature that has channels with very small magnitude; see Appendix \ref{distance_clarification} for details. We calculate the distance (1) between the original MMC of the training set (mini-train) and each test set, to see how much neural networks change channel emphasis when faced with novel tasks, (2) between the original and oracle MMC to see how much the changed emphasis is biased on each dataset, and (3) between the simple and oracle MMC of each dataset to see how much the simple transformation alleviates the problem. The results are shown in Table \ref{distance}. 



\textbf{Neural networks are overconfident in previously learned channel importance.} Comparing the first and second rows in Table \ref{distance}, we can see that the adjustment of MMC that the network made on new tasks is far from enough: the distance of original MMCs between train and test set (the first row) is much smaller than that between original and oracle MMCs on the test set. This suggests channels that are important to previously learned tasks are still considered by the neural network to be important for distinguishing new tasks, but in fact, the discriminative channels are very likely to change on new tasks. This can be also observed from each plot in Figure \ref{visiualize_optimal}, where the oracle MMC pushes up channels having small magnitudes and suppresses channels having large magnitudes. The magnitudes of a large number of small-valued channels are amplified 10$\times$ times or more by the oracle MMC, while large-valued channels are suppressed 5$\times$ times or more, and in most datasets originally large-valued channels eventually have similar channel importance to those of originally small-valued channels. The simple transformation, although not being perfect, also regularizes channels due to its smoothing property discussed in Section 3. We call this problem the \emph{channel bias} problem.

\begin{figure}[t]
    \centering
\centerline{\includegraphics[width=1.0\linewidth]{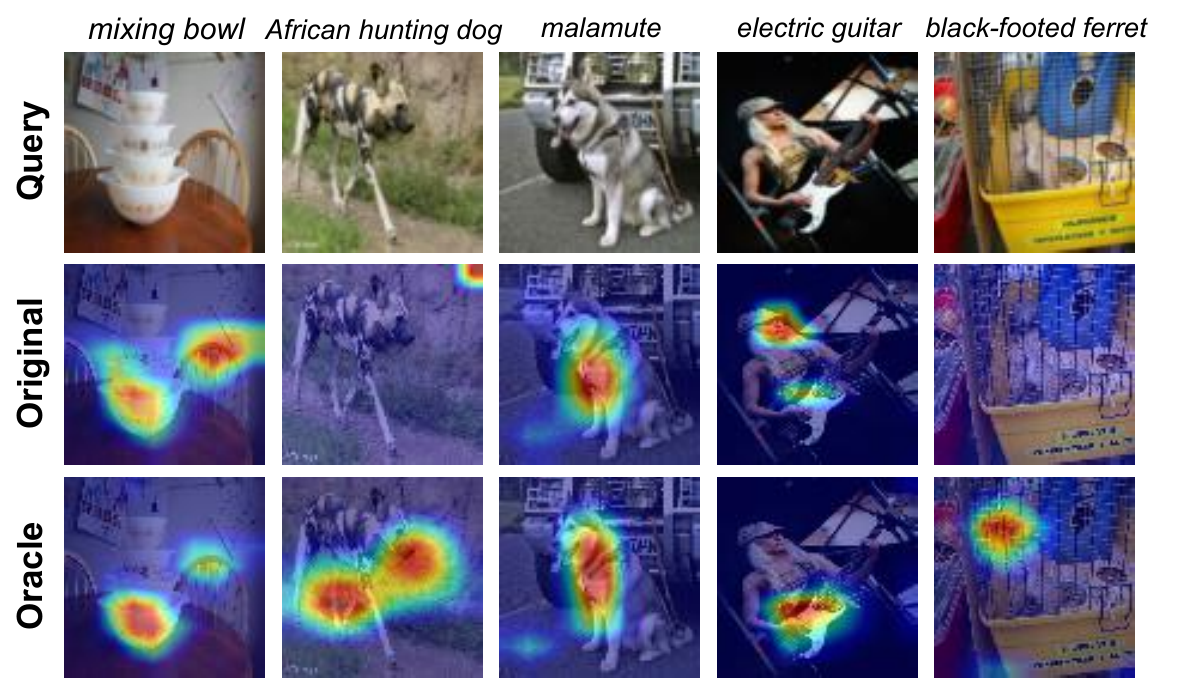}}
\caption{Examples of Grad-Cam~\cite{grad-cam} class activation maps of query samples using PN before and after the oracle adjustment of MMC on binary 5-shot tasks sampled from the test set of \emph{mini}ImageNet.}
\vskip -0.25in
\label{cam}
\end{figure}
\textbf{The channel bias problem diminishes as task distribution shift lessens.} The channel patterns in Figure \ref{visiualize_optimal} on all datasets look similar, except for \emph{mini}ImageNet, whose overall pattern is close to the line $y=x$ representing the original MMCs. There does not exist \emph{dominant} channels when testing on \emph{mini}ImageNet~(The maximum scale of channels is within 0.006), while on other datasets there are channels where the neural network assigns much higher but wrong MMCs which deviate far away from the $y = x$ line. In the second row of Table \ref{distance}, we can also see that the distance between the original and oracle MMCs on \emph{mini}ImageNet, especially on \emph{mini}-train that the model trained on, is much smaller than that on other datasets\footnote{Unnormalized mean square difference ignores critical changes of small-valued channels. This 
is why we do not observe similar phenomenon from the task and image-level difference; see Appendix \ref{distance_clarification} for detailed explanations.}. Since \emph{mini}-test has a similar task distribution with \emph{mini}-train, we can infer that the channel bias is less serious on datasets that have similar task distribution. This explains why in Table \ref{performance} and Figure \ref{pairwise} the simple transformation gets a relatively low improvement when trained on \emph{mini}-train and tested on \emph{mini}-test, and even degrades performance when trained and tested on tasks sampled from the same task distribution.

\textbf{The channel bias problem distracts the neural network from new objects.} In Figure \ref{cam}, we compare some class activation maps before and after the oracle adjustment of MMC. We observe that adjusting channel importance helps the model adjust the attention to the objects responsible for classification using a classifier constructed by only a few support images. This matches observation in previous work~\cite{emergingobject,dissection} that different channels of image representations are responsible for detecting different objects. The task distribution shift makes models confused about which object to focus on, and a proper adjustment of channel emphasis highlights the objects of interest.

\textbf{The simple transformation pushes MMCs towards the oracle ones.} Observing Figure \ref{visiualize_optimal}, it is evident that the simple transformation pushes MMCs towards the oracle ones (compared with the line $y=x$), albeit not perfectly. This observation is further confirmed by the None \emph{v.s.} Oracle and Simple \emph{v.s.} Oracle comparison  of fine-grained task-level and image-level distance shown from the third row to the last row of Table  \ref{distance}. On each of the test-time dataset, the distance between MMCs obtained by simple and oracle transformation is smaller than that bewteen original MMCs and the MMCs obtained by  oracle transformation.






\section{Analysis of the Number of Shots}

We have seen that the channel bias problem is one of the main reasons why image representations cannot generalize well to new few-shot classification tasks. However, two questions remain to be answered: (1) we are still unclear whether this problem is only tied with few-shot image classification. In all previous experiments, we tested on tasks where only 5 labeled images per class are given. What will happen if we have more training examples in the new task? (2) How much will different test-time methods be influenced by the channel bias problem? If we have the opportunity to fine-tune the learned representations, will the proposed simple transformation still work? 


\begin{figure}[t]

\centering
\centerline{\includegraphics[width=1.0\linewidth]{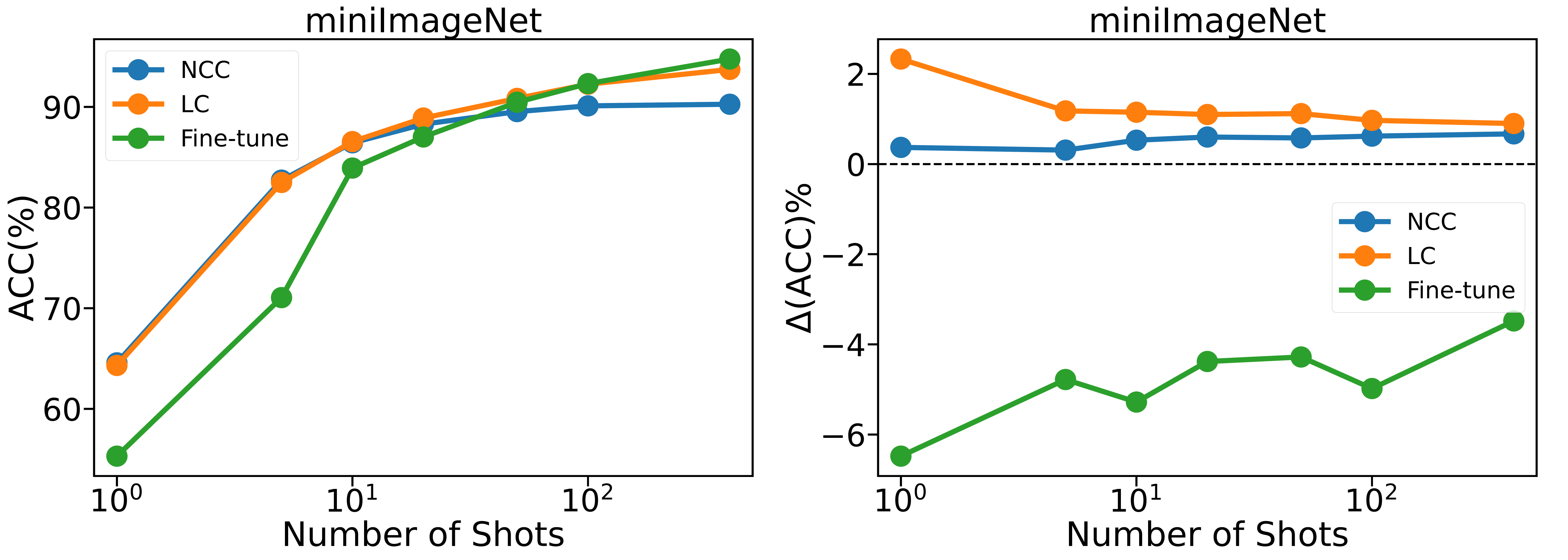}}
\caption{Shot analysis of \emph{mini}ImageNet. Left: performance of different test-time methods. Right: performance gains of the simple transformation using different test-time methods.}
\vskip -0.2in
\label{shot_analysis}
\end{figure}
%

In order to give answers to these questions, we conduct shot analysis experiments on three representative test-time methods that are adopted or are the basis of most mainstream few-shot classification algorithms: (1) The metric-based method Nearest-Centroid Classifier (NCC) presented in ProtoNet, which first average image features of each class in the support set to form class centroids and then assign query features to the class of the nearest centroid; (2) Linear Classifier (LC), which trains a linear layer upon learned image features in the support set, and (3) Fine-tuning, which fine-tunes the feature extractor together with the linear layer using images in the support set. The feature extractor is trained using the state-of-the-art S2M2 algorithm on the training set of \emph{mini}ImageNet, and we test it on the test set of \emph{mini}ImageNet using the above three test-time methods with different numbers of labeled images in each class of the support set. The results are shown in Figure \ref{shot_analysis}. We show the original accuracy of all methods, as well as the impact of simple transformation on the performance.

We first take a look at the right plot, which shows the impact of the simple transformation on all the methods. The performance gains on NCC and LC stay at a relatively high value for all tested shots, which is up to 400 labeled images per class. This indicates that the channel bias problem is not only linked to few-shot settings, but also exists in many-shot settings. However, when we have abundant support images, we
have an alternative choice of fine-tuning the feature extractor directly. Fine-tuning methods have the potential to fully resolve the channel bias problem by directly modifying the image representation and rectifying the channel distribution. The right figure shows that the simple transformation does not improve fine-tuning methods, so indeed the channel bias problem has been largely alleviated. In the left figure, the fine-tuning method exhibits its advantages in many-shot setting, but falls short in few-shot settings. Therefore, we can infer that the channel bias problem exists only in the few-shot setting where freezing the feature extractor and building the classifier on learned features becomes a better choice.

We also have another notable observation. While the performance gain of simple transformation on NCC stays around a fixed value, the performance gain on LC decreases with the increase of shots. Thus the channel bias problem is alleviated to some extent in many-shot settings. This is because more labeled data tells the linear classifier
sufficient information about intra-class variance of data,
making it possible to adjust MMC by modifying the
scale of each row of the linear transformation matrix. So Linear Classifier
can stably increase its performance when more labeled data comes in, until no more linear separation can be achieved, and also the time fine-tuning should get into play to adjust the feature space directly.

\section{Discussion and Related Work}

\textbf{Task distribution shift.} Task distribution shift may happen when a model faces category shift, domain shift or granularity shift. Conventional benchmarks of FSL only consider category shift, i.e. the categories are disjoint for training and testing, such as \emph{mini}ImageNet~\cite{matchingnet} and CIFAR-FS~\cite{CIFAR-FS}. In \emph{cross-domain} few-shot learning~\cite{closerlook}, domain shift exists between train and test-time tasks, and several later benchmarks such as BSCD-FSL~\cite{BSCD} and Meta-dataset~\cite{metadataset} both target at such setting. Recently, the shift of granularities of categories has been considered as another type of task distribution shift, and is also called Coarse-to-Fine Few-Shot (C2FS) Learning~\cite{me1, C2FS1,C2FS2}, which trains a model on coarse-labeled 
images and tests on few-shot tasks that aim at distinguishing between fine-grained subclasses of training categories. Our work reveals that all three types of task distribution shift have a similar phenomenon of channel bias problem.

The influence of task distribution shift on FSL has been firstly studied in~\cite{crosstransformer}. They find that the representation constructed by meta-learning algorithms cannot capture useful discriminative information outside of the training categories. They solve this problem by highlighting the crucial spatial information for classification, using a cross-attention module between support and query features in new tasks. The algorithm COSOC~\cite{rectifying} also considers filtering task-irrelevant spatial information, but it achieves it more directly. They identify image background as harmful information in both training and testing and design a method to remove the background, in order to reduce the difficulty of category transfer. The perspective of our work is different, not focusing on discriminative spatial positions of features, but orthogonally taking inspections on discriminative channel information of features. 




%


\textbf{Test-time methods for FSL.} The presented three methods in Section 4 represent three types of mainstream algorithms in FSL. (1) \textbf{Finetuning}---Optimization-based algorithms, originated mainly from MAML~\cite{maml}, optimizes both the learned feature extractor and classifier together at test-time. Most work that fall into this type use meta-learning to train the network~\cite{opt1,opt2,opt3,opt4}. When training adopts conventional supervised approaches, the method turns to resemble transfer learning approaches, and is adopted in~\cite{baseline} and BiT~\cite{BiT}. In the experiments of Section 4, we notice that in few-shot settings, although alleviating channel bias problem, fine-tuning method  performs generally worse and may require very different hyperparameters for different test-time datasets to avoid overfitting, which is impossible to achieve in a realistic few-shot scenario, thus we believe finetuning would not be the best test-time choice. (2) \textbf{NCC}---metric-based algorithms~\cite{matchingnet,ProtoNet,DeepEMD,Cross-Attention,crosstransformer} that aim at learning a well-shaped feature space equipped with a distance metric for comparing the similarity of images, on which the test-time prediction depends. Metric-based methods, as we have shown, benefit from inductive bias given by the metric and thus are widely adopted in state-of-the-art algorithms. (3) \textbf{LC}---most conventionally trained methods adopt LC as the test-time methods~\cite{closerlook,S2M2,allyouneed,neg-cosine,IER}, and two meta-learning algorithms MetaOpt~\cite{metaopt}  and ANIL~\cite{mamlrepresentation} use LC in both training and testing. The importance of a good quality of image representation is mainly figured out from this line of work.

\textbf{Other feature transformations in FSL.} LFT~\cite{LFT} introduces learnable channel-wise feature transformations into training for cross-domain few-shot learning. The transformations are put inside backbone, instead of on top of representations, and are only used at train time, learned in a learning-to-learn fashion using multiple domains of datasets. Z-score transformation upon image representations is introduced in~\cite{z-score} for solving the hubness problem of image representations in FSL.  CCF~\cite{CCF} proposes a variant of variational autoencoder to transform features, which utilizes category relationship between training and test-time classes to rectify the feature distributions.  Feature-wise linear modulation (FiLM)~\cite{film} that turns scaling and shifting coefficients in batch normalization layer (seen as parameters of a linear feature transformation) into dataset- or task-dependent learnable parameters has been adopted in several FSL algorithms~\cite{tadam,CNAPs,template_net,TSA}. The core idea of these methods is to only tune the FiLM modules  at test time in order to reduce overfitting. Thus these methods in some sense belong to finetuning-based methods, and have the potential to perform better than vanilla finetuning in low-shot settings. Contrary to our work, all methods discussed above do not discover or target at the channel bias problem. The most relevant method to our paper may be ConFeSS~\cite{ConFeSS}, a framework that masks task-irrelevant channels in image representations at test time for cross-domain few-shot learning. Our work shows that the success of ConFeSS may be attributed to alleviating the channel bias problem by abandoning overconfident channels when transferred to novel tasks.


\section{Conclusion}
In this paper, we reveal the channel bias problem in few-shot image classification. The problem can be alleviated by a simple channel-wise feature transformation presented in this work. This transformation, used at test-time without adding any computation overhead, can be applied to most pre-trained convolutional neural networks and few-shot learning algorithms. We show it serves as prior knowledge that regularizes the channel distribution of features. Further analysis, including a derivation of the oracle MMC adjustment, analyzes comprehensively the channel bias problem.  We hope that the channel bias problem revealed in this work, along with analysis of different test-time methods, can provide the community with a better understanding of task distribution shift and representation transfer in few-shot classification, which may in turn help produce better algorithms.

\section*{Acknowledgments}
Special thanks to Qi Yong for providing indispensable spiritual support for the work. We also would like to thank all reviewers for very constructive comments that help us improve the paper. This work was partially supported by the National Key Research and Development Program of China (No. 2018AAA0100204), and a key  program of fundamental research from Shenzhen Science and Technology Innovation Commission (No. JCYJ20200109113403826).

\bibliography{icml}
\bibliographystyle{icml2022}

\newpage
\appendix
\onecolumn

\section{Proof of Proposition 3.1}
\label{all_proof}
\begin{lemma}
\label{cantelli}
(\textbf{Cantelli's inequality}~\cite{cantelli}) Let $X$ be a random variable with finite expected value $\mu$ and finite non-zero variance $\sigma^2$. Then for any $k>0$,
\begin{equation}
    \mathbb{P}(X-\mu\geq k\sigma)\leq \frac{1}{1+k^2}.
\end{equation}
\end{lemma}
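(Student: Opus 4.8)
The plan is to prove this one-sided tail bound (Cantelli's inequality) by reducing it to an ordinary Markov inequality after a well-chosen shift and squaring trick. First I would center the variable: setting $Y = X - \mu$, the random variable $Y$ has mean $0$ and variance $\sigma^2$, and the claim becomes $\mathbb{P}(Y \geq k\sigma) \leq 1/(1+k^2)$. This reduction is purely cosmetic but keeps the subsequent algebra clean, since $\mathbb{E}[Y]=0$.

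The key idea is to introduce a free parameter $t > 0$ and exploit the event inclusion $\{Y \geq k\sigma\} \subseteq \{(Y+t)^2 \geq (k\sigma+t)^2\}$. This inclusion holds because on the event $Y \geq k\sigma$ we have $Y + t \geq k\sigma + t > 0$ (using $k,\sigma > 0$ together with $t > 0$), so squaring both sides of an inequality between positive quantities preserves its direction. Applying Markov's inequality to the nonnegative random variable $(Y+t)^2$ then yields $\mathbb{P}(Y \geq k\sigma) \leq \mathbb{E}[(Y+t)^2]/(k\sigma+t)^2 = (\sigma^2 + t^2)/(k\sigma+t)^2$, where I used $\mathbb{E}[Y] = 0$ so that $\mathbb{E}[(Y+t)^2] = \sigma^2 + t^2$.

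Since $t$ was arbitrary, I would then minimize the right-hand side $g(t) = (\sigma^2 + t^2)/(k\sigma+t)^2$ over $t > 0$ to obtain the tightest bound. Differentiating, the sign of $g'(t)$ is governed by the factor $tk - \sigma$, so $g$ has a unique stationary point at $t^\star = \sigma/k$, and a sign check confirms it is the minimizer. Substituting $t^\star = \sigma/k$ and simplifying — the numerator $\sigma^2(1 + 1/k^2)$ and the denominator $\sigma^2(k + 1/k)^2 = \sigma^2(k^2+1)^2/k^2$ both carry a common factor of $(k^2+1)$ — collapses the bound exactly to $1/(1+k^2)$, which is the claim.

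I expect no genuine obstacle here; the one step requiring care is the event-inclusion argument, where one must verify that both $Y+t$ and $k\sigma+t$ are positive before squaring, which is guaranteed precisely because $k\sigma > 0$ and $t > 0$. The optimization is routine calculus, and the hypotheses that $\mu$ is finite and $\sigma^2$ is finite and nonzero ensure all expectations appearing above are well-defined and that the division by $(k\sigma+t)^2$ is legitimate.
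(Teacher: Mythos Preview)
Your proof is correct and is essentially the standard textbook argument for Cantelli's inequality via the shift-and-square trick followed by Markov's inequality and optimization over the shift parameter. The paper itself does not supply a proof of this lemma at all; it merely states the result and cites the literature, so there is no paper proof to compare against. Your write-up therefore goes beyond what the paper does, and the care you take with the positivity of $k\sigma+t$ before squaring is exactly the point that needs checking.
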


\begin{lemma}
\label{minimum}
Let $a_i>0, b_i>0,i=1,...,D$. Define $f:[0,+\infty)^D/\{\mathbf{0}\}\rightarrow \mathbb{R}$ by 
\begin{equation}
f(\xx)=\frac{\sum_{i=1}^Db_ix_i^2}{(\sum_{i=1}^Da_ix_i)^2},
\end{equation}
then 
\begin{equation}
\min\limits_{\xx}f(\xx)=\frac{1}{\sum_{i=1}^D\frac{a_i^2}{b_i}}.
\end{equation}
The mimimum value is reached when there exists a constant $c>0$, such that $\forall i\in [D], x_i=\frac{a_ic}{b_i}$.
\end{lemma}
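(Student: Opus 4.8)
The plan is to prove the lemma by a single application of the Cauchy--Schwarz inequality, then read off the minimizer from the equality case and check that it lies in the domain. First I would note that $f$ is genuinely well-defined on all of $[0,+\infty)^D\setminus\{\mathbf{0}\}$: since every $a_i>0$ and $\xx$ has nonnegative entries that are not all zero, the denominator $\sum_{i=1}^D a_i x_i$ is strictly positive, so there is no division-by-zero issue anywhere on the domain. I would also record that $f$ is invariant under positive rescaling of $\xx$ (it is homogeneous of degree $0$), so the infimum over the whole cone equals the infimum over the compact slice $\{\xx\in[0,+\infty)^D : \sum_i a_i x_i = 1\}$, on which it is attained; this is what justifies writing $\min$ rather than $\inf$.

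The core step is to write $\sum_{i=1}^D a_i x_i = \sum_{i=1}^D \frac{a_i}{\sqrt{b_i}}\cdot \sqrt{b_i}\,x_i$ and apply Cauchy--Schwarz to the two vectors $\bigl(a_i/\sqrt{b_i}\bigr)_i$ and $\bigl(\sqrt{b_i}\,x_i\bigr)_i$, which gives
\[
\Bigl(\sum_{i=1}^D a_i x_i\Bigr)^2 \;\le\; \Bigl(\sum_{i=1}^D \frac{a_i^2}{b_i}\Bigr)\Bigl(\sum_{i=1}^D b_i x_i^2\Bigr).
\]
Rearranging yields $f(\xx) = \dfrac{\sum_i b_i x_i^2}{(\sum_i a_i x_i)^2} \ge \dfrac{1}{\sum_i a_i^2/b_i}$, the claimed lower bound.

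Finally I would analyze when equality holds. Cauchy--Schwarz is tight precisely when $\bigl(\sqrt{b_i}\,x_i\bigr)_i$ is a nonnegative multiple of $\bigl(a_i/\sqrt{b_i}\bigr)_i$, i.e.\ when there is $c\ge 0$ with $\sqrt{b_i}\,x_i = c\,a_i/\sqrt{b_i}$, equivalently $x_i = c\,a_i/b_i$ for all $i$; requiring $\xx\neq\mathbf{0}$ forces $c>0$. Since $a_i,b_i>0$, this $\xx$ has all strictly positive coordinates, hence lies in the domain, and substituting it back confirms $f(\xx)$ equals $\bigl(\sum_i a_i^2/b_i\bigr)^{-1}$. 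This simultaneously establishes the value of the minimum and identifies the minimizing set as the ray $\{c\,(a_i/b_i)_i : c>0\}$, matching the statement.

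I do not expect a real obstacle: the argument is a textbook Cauchy--Schwarz application, and the only places needing a little care are (a) verifying the denominator never vanishes on $[0,+\infty)^D\setminus\{\mathbf{0}\}$ so that $f$ is everywhere defined, and (b) making the passage from ``infimum'' to ``attained minimum'' rigorous via scale-invariance and compactness rather than by assertion. An alternative route would be a Lagrange-multiplier computation after normalizing $\sum_i a_i x_i = 1$, but the Cauchy--Schwarz proof is shorter and makes the equality set transparent, so that is the version I would write up.
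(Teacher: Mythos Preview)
Your proof is correct and takes a genuinely different route from the paper. The paper proceeds by induction on $D$: it uses scale invariance to restrict to a compact annulus, splits into boundary faces $\{x_i=0\}$ (handled by the inductive hypothesis, giving value $1/\sum_{j\neq i} a_j^2/b_j$) and the open positive orthant (where first-order conditions $\partial f/\partial x_i=0$ force $x_i\propto a_i/b_i$), and then compares the two to conclude the interior critical point is the global minimizer. Your argument replaces all of this with a single Cauchy--Schwarz step on the vectors $(a_i/\sqrt{b_i})_i$ and $(\sqrt{b_i}\,x_i)_i$, immediately yielding both the lower bound and the equality set. The Cauchy--Schwarz proof is substantially shorter and makes the minimizing ray transparent without any case analysis or calculus; the paper's inductive argument is more self-contained in the sense of not invoking a named inequality, but is considerably longer and requires separate boundary handling. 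Your remarks on well-definedness and on attainment via compactness of the normalized slice are also cleaner than the paper's annulus argument.
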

\begin{proof}
We show it by induction on dimension $D$. Denote the domain of $f$ by $U$, i.e., $U=[0,+\infty)^D/\{\mathbf{0}\}$.

When $D=1$, $f(x)\equiv\frac{b_1}{a_1^2}$ is a constant, so the result holds.

Assume that when $D\leq k$, the result holds. We now prove that when $D=k+1$, the result holds. It is obvious that $\forall c>0, f(c\xx)=f(\xx)$, thus it suffices to find a minimum point in $\overline{\mathcal{S}}=\{\xx|a\leq||\xx||_2\leq b\;\mathrm{and}\;\xx\in U\}$ for any chosen $0<a<b$. Since $\overline{\mathcal{S}}$ is a closed set and $f$ is continuous, the minimum point exists. The minimum point either lies on the hyperspheres: $\partial S=\{\xx|||\xx||_2=a\;\mathrm{and}\;\xx\in U\}\cup\{\xx|||\xx||_2=b\;\mathrm{and}\;\xx\in U\}$ or in between: $\mathcal{S}=\{\xx|a<||\xx||_2<b\;\mathrm{and}\;\xx\in U\}$. If there exists a minimum point on one of the hyperspheres, say, the outer hypersphere $\{\xx|||\xx||_2=b\;\mathrm{and}\;\xx\in U\}$, then there exists another minimum point $\frac{(a+b)\xx}{2b}\in \mathcal{S}$ (or $\frac{(a+b)\xx}{2a}\in \mathcal{S}$ for the inner hypersphere). Thus it suffices to find the minimum point in $\mathcal{S}$.

Let $\mathcal{S}_{/i}=\{\xx|\xx\in\mathcal{S}\;\mathrm{and}\;x_i=0\}$ and $\mathcal{S}_{>0}=\{\xx|\xx\in\mathcal{S}\;\mathrm{and}\;x_i>0\;\mathrm{for}\;\mathrm{all}\;i\in[k+1]\}$. We have $\mathcal{S}= (\cup_{i=1}^{k+1}\mathcal{S}_{/i})\cup\mathcal{S}_{>0}$. If $\xx\in \mathcal{S}_{/i}$, then 
\begin{equation}
    f(\xx) = \frac{\sum_{j=1}^{i-1}b_jx_j^2+\sum_{j=i+1}^{k+1}b_jx_j^2}{(\sum_{j=1}^{i-1}a_jx_j+\sum_{j=i+1}^{k+1}a_jx_j)^2},
\end{equation}
which can be seen as a function with input dimension $k$. Thus from the induction, we have
\begin{equation}
\label{lasttwo}
\min\limits_{\xx\in \mathcal{S}_{/i}}f(\xx)=\frac{1}{\sum_{j=1}^{i-1}\frac{a_j^2}{b_j}+\sum_{j=i+1}^{k+1}\frac{a_j^2}{b_j}}.
\end{equation}
Next, we handle the setting when $\xx\in\mathcal{S}_{>0}$, i.e., find all possible extreme points of $f$ inside $\mathcal{S}_{>0}$. Note that 
\begin{equation}
    \frac{\partial f}{\partial x_i}=\frac{2(b_ix_i\sum_{j=1}^{k+1}a_jx_j-a_i\sum_{j=1}^{k+1}b_jx_j^2)}{(\sum_{j=1}^{k+1}a_jx_j)^3},
\end{equation}
then an extreme point $\xx$ must satisfy
\begin{equation}
    b_ix_i\sum_{j=1}^{k+1}a_jx_j-a_i\sum_{j=1}^{k+1}b_jx_j^2=0, \forall i\in[k+1],
\end{equation}
which is equivalent to
\begin{equation}
\label{last}
x_i=\frac{a_i\sum_{j=1}^{k+1}b_jx_j^2}{b_i\sum_{j=1}^{k+1}a_jx_j}=(\frac{\sum_{j=1}^{k+1}b_jx_j^2}{\sum_{j=1}^{k+1}a_jx_j})\frac{a_i}{b_i}, \forall i\in[k+1].
\end{equation}
Thus $x_i=c\frac{a_i}{b_i}$, where $c = \frac{\sum_{j=1}^{k+1}b_jx_j^2}{\sum_{j=1}^{k+1}a_jx_j}$. Furthermore, it is easy to show that
$x_i=c\frac{a_i}{b_i}$ satisfies Eq.~(\ref{last}) for any $c>0$. Denote any of the points satisfying this property as $\xx^*$, then
\begin{equation}
    \label{lastzero}
    f(\xx^*)=\frac{1}{\sum_{i=1}^{k+1}\frac{a_i^2}{b_i}}.
\end{equation}
Comparing Eq.~(\ref{lasttwo}) and Eq.~(\ref{lastzero}), it can be seen that

\begin{equation}
    \label{final_minimum}
    f(\xx^*)< \min\limits_{\xx\in \mathcal{S}_{/i}}f(\xx), \forall i\in[k+1].
\end{equation}
Finally, note that $\partial S$ and  $\{\mathcal{S}_{/i}\}_{i=1}^{k+1}$ constitute the boundary of $\mathcal{S}_{>0}$, thus Eq.~(\ref{final_minimum}) and earlier discussion about $\partial S$ indicate that $\xx^*$ is the minimum point of $f$, as desired.
\end{proof}

\textbf{Proposition 3.1.}
\emph{Assume that $\mu_{1,l}\neq\mu_{2,l}$ and $\sigma_{1,l}+\sigma_{2,l}>0$ hold for any $l\in[d]$, then we have}
\begin{equation}
    \begin{split}
\mathcal{R}\leq\frac{8\sum_{l=1}^d\omega_l^4(\ws_{1,l}+\ws_{2,l})^2}{(\sum_{l=1}^d\omega_l^2(\wu_{1,l}-\wu_{2,l})^2)^2}.
    \end{split}
\end{equation}
\emph{To minimize this upper bound, the adjusted oracle MMC of each channel $\omega_l$ should satisfy:}
\begin{equation}
\omega_l \propto \frac{|\mu_{1,l}-\mu_{2,l}|}{\sigma_{1,l}+\sigma_{2,l}}.
\end{equation}
\begin{proof}
    \begin{align}
        \mathcal{R} &= 
        \frac{1}{2}[\mathbb{P}_{\zz_1\sim D_1}(||\oo\odot (\hz_1-\hu_1)||_2>||\oo\odot (\hz_1-\hu_2)||_2)
        +\mathbb{P}_{\zz_2\sim D_2}(||\oo\odot (\hz_2-\hu_2)||_2>||\oo\odot (\hz_2-\hu_1)||_2)]\nonumber\\
        &=\frac{1}{2}[\mathbb{P}_{\zz_1\sim D_1}(\sum_{l=1}^d\omega_l^2[(\wz_{1,l}-\wu_{1,l})^2-(\wz_{1,l}-\wu_{2,l})^2]>0)+\mathbb{P}_{\zz_2\sim D_2}(\sum_{l=1}^d\omega_l^2[(\wz_{2,l}-\wu_{2,l})^2-(\wz_{2,l}-\wu_{1,l})^2]>0)]\nonumber\\
        &=\frac{1}{2}[\mathbb{P}_{\zz_1\sim D_1}(\sum_{l=1}^d\omega_l^2(1-\wz_{1,l})(\wu_{1,l}-\wu_{2,l})>0)+\mathbb{P}_{\zz_2\sim D_2}(\sum_{l=1}^d\omega_l^2(1-\wz_{2,l})(\wu_{2,l}-\wu_{1,l})>0)]\nonumber\\
        &\leq\frac{2\sum_{l=1}^d\omega_l^4(\wu_{1,l}-\wu_{2,l})^2(\ws_{1,l}^2+\ws_{2,l}^2)}{(\sum_{l=1}^d\omega_l^2(\wu_{1,l}-\wu_{2,l})^2)^2} \quad\quad\quad \text{[Applying Lemma \ref{cantelli}]}\nonumber\\
        &\leq\frac{2\sum_{l=1}^d\omega_l^4(\wu_{1,l}+\wu_{2,l})^2(\ws_{1,l}^2+\ws_{2,l}^2)}{(\sum_{l=1}^d\omega_l^2(\wu_{1,l}-\wu_{2,l})^2)^2}\quad\quad\quad \text{[} \wu_{1,l}\wu_{2,l}\geq0\text{]}\nonumber\\
        &=\frac{8\sum_{l=1}^d\omega_l^4(\ws_{1,l}^2+\ws_{2,l}^2)}{(\sum_{l=1}^d\omega_l^2(\wu_{1,l}-\wu_{2,l})^2)^2}\quad\quad\quad\quad\quad\quad\quad\ \,\text{[Standadization: } (\wu_{1,l}+\wu_{2,l})/2=1 \text{]}\nonumber\\
        &\leq\frac{8\sum_{l=1}^d\omega_l^4(\ws_{1,l}+\ws_{2,l})^2}{(\sum_{l=1}^d\omega_l^2(\wu_{1,l}-\wu_{2,l})^2)^2}.\quad\quad\quad\quad\quad\quad\quad\,\text{[} \ws_{1,l}\ws_{2,l}\geq0 \text{]}
    \end{align}

Let $x_l=\omega_l^2, a_l=(\wu_{1,l}-\wu_{2,l})^2, b_l=(\ws_{1,l}+\ws_{2,l})^2$, then according to Lemma \ref{minimum}, the minimum value of the upper bound (19) is reached when $\omega_l^2\propto\frac{(\wu_{1,l}-\wu_{2,l})^2}{(\ws_{1,l}+\ws_{2,l})^2}=\frac{(\mu_{1,l}-\mu_{2,l})^2}{(\sigma_{1,l}+\sigma_{2,l})^2}$, i.e., $\omega_l\propto\frac{|\mu_{1,l}-\mu_{2,l}|}{\sigma_{1,l}+\sigma_{2,l}}$.
\end{proof}


\section{Training and Evaluation Details}
\label{secdetails}
For S2M2 and MoCo-v2 in Table \ref{performance}, we directly use the official publicly-available pre-trained checkpoints. All other algorithms in Table \ref{performance} are trained using a learning rate 0.1 with cosine decay schedule without restart. SGD with momentum 0.9 is adopted as the optimizer. For all meta-learning algorithms, a total of 60000 5-way 5-shot tasks are sampled for training, each of which contains 15 query images per class. The batch size (number of sampled tasks of each iteration) is 4. All other hyperparameters of MetaOpt match the default settings in the original paper. All conventionally-trained algorithms are trained for 60 epochs, and the batch size is set to 128. For the training of the CE (Cross-Entropy) algorithm, we normalize the representation before the fully-connected layer. We find that if we do not  normalize the representation during the training of CE, the simple transformation does not work. We leave it for future work to investigate this phenomenon.

For the test-time linear classification method we implement for MoCo and S2M2 in Table \ref{performance} and Figure \ref{shot_analysis}, we adopt the Logistic Regression  implementation of scikit-learn~\cite{scikit}.


\section{The Effect of Hyperparameter $k$}
\label{seceffectk}
In Figure \ref{k_experiment}, we show how the hyperparameter $k$ in Eq. (\ref{simple_transformation}) influences the few-shot classification performance. On all datasets, As the $k$ becomes larger, the accuracy first increases and then decreases. The optimal value of $k$ varies for different datasets, ranging from $0.6$ to $1.8$. That being said, the simple transformation gives a relatively stable performance improvement on all datasets when $k\in[1,2]$. Notably, datasets with larger task distribution shift often give a smaller optimal $k$. This phenomenon is reasonable because as seen from Figure \ref{transformation_plot}, a smaller $k$ leads to a larger smoothing effect, and the transformation can better rectify the channel distribution when the task distribution shift is also larger. 


\begin{figure*}[t]

\centering
\centerline{\includegraphics[width=1.0\linewidth]{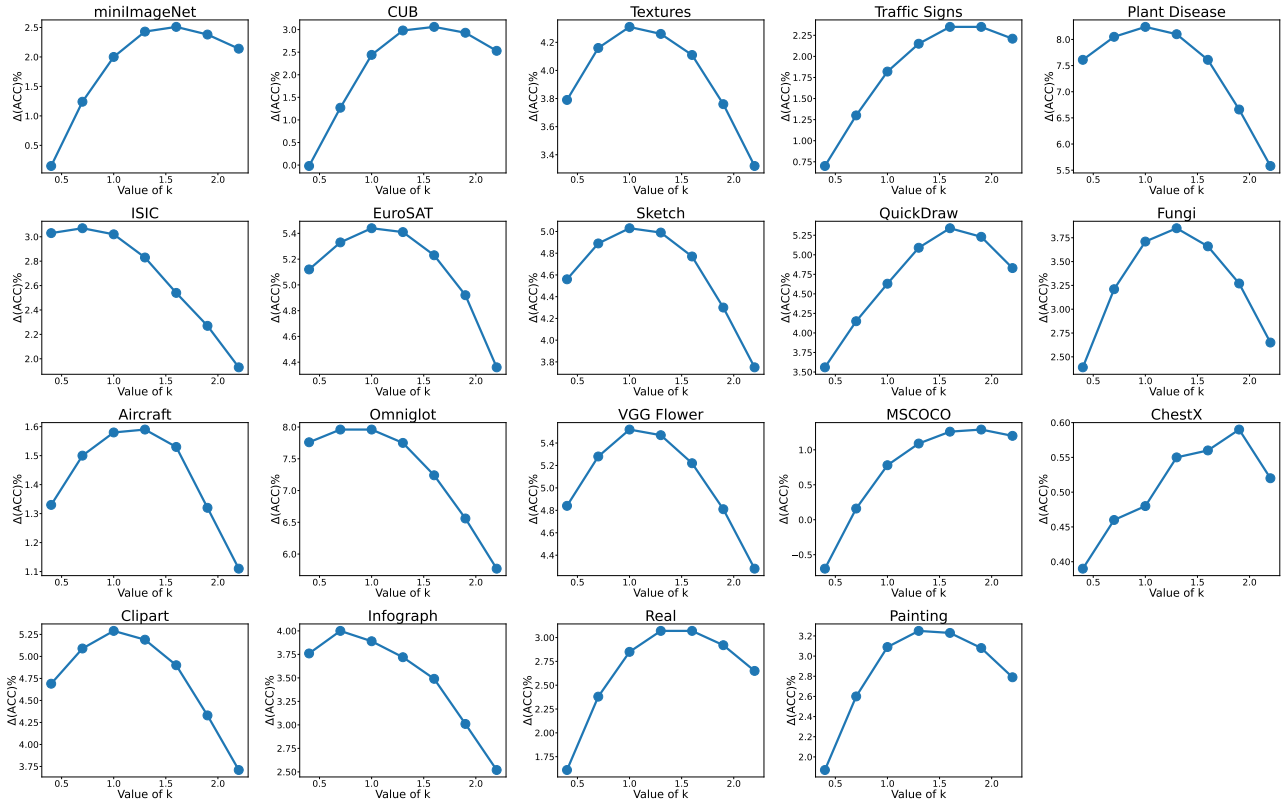}}
\caption{The effect of the hyperparameter $k$ in the simple transformation on each test-time dataset. The feature extractor is trained on \emph{mini}ImageNet using PN. All experiments are conducted on 10000 5-way 5-shot tasks sampled with a fixed seed.}
\label{k_experiment}
\vskip -0.2in
\end{figure*}

\section{Attempts at Handling Negative Output Values of Neural Networks}
\label{secattempt}
As shown in Section \ref{sec3}, the MMC of image representations can represent the emphasis of neural network on different channels. However, things get complicated if the output of the neural network can take negative values. If the value of a channel represents the activation of a feature, it is difficult to say whether a large negative value means a large or small activation. At this time, we consider negative value as a signal of feature activation as well, which leads to the following simple extension of the transformation:


\begin{equation}
    \phi_k(\lambda) =\begin{cases} \frac{\mathrm{sign}(\lambda)}{ln^k(\frac{1}{|\lambda|}+1)}, &|\lambda|>0\\
    0, &\lambda=0
    \end{cases}
\end{equation}

now this extended simple transformation can be applied directly to the standard ResNet-12 using leaky ReLU, and the results are shown in Table \ref{positive}. While leaky ReLU improves the basic performance compared to vanilla ReLU, the improvement of the simple transformation becomes significantly smaller. We conjecture that a large negative magnitude of a channel does not strictly mean that this channel is important. It is future work to investigate how to exactly measure channel importance in such circumstances.


\begin{table*}[t]
\setlength\tabcolsep{4pt}
\caption{Performance gains when applying the extended version of the simple transformation to ResNet-12 with Leaky ReLU trained on mini-train.}
\label{positive}
\centering
\begin{tabular}{c|cccccccccc|c}
\\[-1em]
 Algorithm  & \emph{mini}-test & CUB & Texture & TS & PlantD & ISIC & ESAT & Sketch & QDraw & Fungi & Avg
\\

PN & \acc{76.0}{0.5} &  \acc{59.3}{0.6} & \acc{62.0}{0.7} & \macc{66.3}{0.2} & \acc{78.2}{2.8} & \acc{38.1}{1.3} & \acc{75.1}{0.8} & \acc{52.7}{0.3} & \acc{66.5}{2.9} & \acc{55.4}{0.0} & \acc{63.0}{1.0}
\\
CE & \acc{79.4}{0.2} & \acc{64.5}{0.8}  & \macc{66.1}{0.2} & \acc{69.9}{0.1} & \acc{84.9}{2.1} & \acc{40.1}{0.1} & \acc{77.6}{0.4} & \acc{53.6}{0.4} & \acc{72.1}{3.7}
& \acc{57.4}{1.1} & \acc{66.6}{0.9}

\end{tabular}
%
\end{table*}


\begin{figure*}[t]

\centering
\centerline{\includegraphics[width=1.0\linewidth]{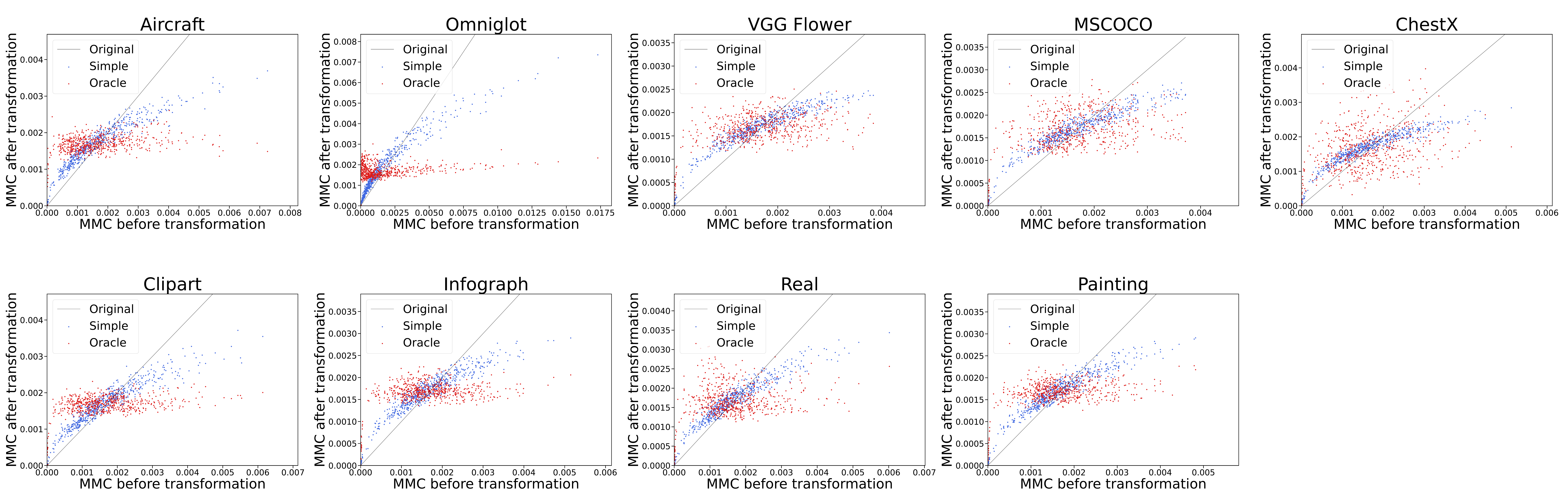}}
\caption{\textbf{The Visualization of MMC of the other nine datasets before and after the use of simple and oracle transformation.} All notations are the same as in Figure \ref{visiualize_optimal}.}


\vskip -0.15in
\label{visiualize_optimal_2}
\end{figure*}

\begin{figure*}[t]

\centering
\centerline{\includegraphics[width=1.0\linewidth]{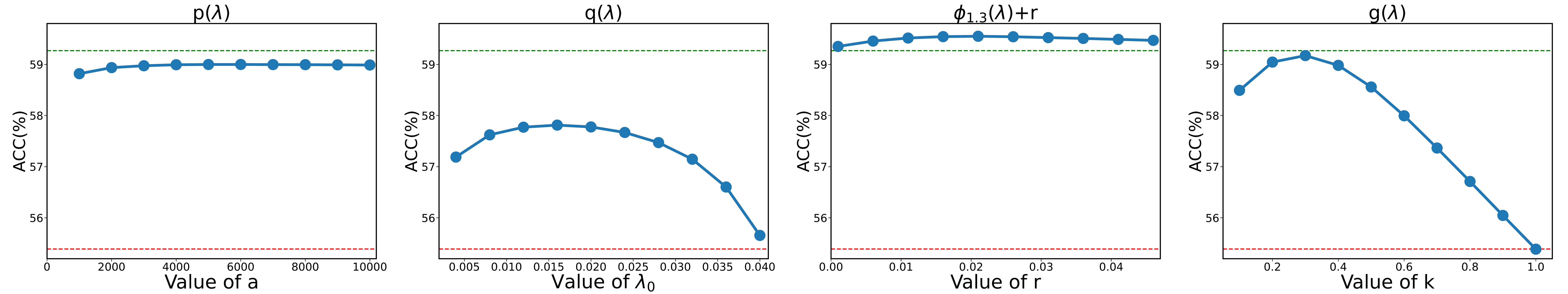}}
\caption{Exploration of necessary ingredients for a good channel-wise transformation. The accuracies show the average 5-way 5-shot performance over all 19 datasets using PN trained on \emph{mini}ImageNet. The red dashed line shows the original performance; the green dashed line shows the performance when using the simple transformation $\phi_{1.3}(\lambda)$. The leftmost plot shows the performance when using the function $p(\lambda)=ln(a\lambda+1)$. The second plot shows the performance when using the piece-wise function $q(\lambda)$. The third plot shows the performance when adding a constant $r$ to the simple transformation $\phi_{1.3}(\lambda)$. The rightmost plot shows the performance when using the power function $g(\lambda)=x^k$.}

\vskip -0.15in
\label{visualize_property}
\end{figure*}

\section{Necessary Ingredients for a Good Transformation}
\label{ingredient}
One may ask that whether all of the three properties presented in Eq. (\ref{property}) are necessary for a transformation to successfully improve few-shot learning performance, or whether there exist good transformations other than $\phi_k(\lambda)$ considered in the main article. To verify the necessity of all properties, we design several functions, each of which does not satisfy one of the properties. First, we consider the function $p(\lambda)=ln(a\lambda+1)$, where $a>0$. This function has positive derivative and negative second derivative, but does not have large enough derivative near zero ($p'(0)=a$). In the left plot of Figure \ref{visualize_property}, we see that the improvement brought by this function is smaller than $\phi_{1.3}(\lambda)$, and that the gap becomes smaller when $a$ 
increases. This validates the necessity of having a large enough derivative near zero. We then consider the piece-wise function 

\begin{equation}
    q(\lambda) =\begin{cases} \phi_k(\lambda), &0\leq \lambda<\lambda_0\\
    a_2\lambda^2+a_1\lambda+a_0, &\lambda\geq \lambda_0
    \end{cases}
\end{equation}

where the values of $a_2,a_1,a_0$ ensure the smoothness of $q(\lambda)$ at $\lambda=\lambda_0$ up to first derivative, and also control the position of the extreme point $x_0=-\frac{a_1}{2a_2}$. This function does not have positive derivative when $\lambda\geq x_0$. We set $x_0=0.05$, and change the value of $\lambda_0$. The results are shown in the second plot in Figure \ref{visualize_property}. As seen, introducing negative derivative into the transformation substantially degrades performance. Finally, the property of having negative second derivative can be naturally broken by increasing the value of $k$ in $\phi_k(\lambda)$, and Figure \ref{k_experiment} shows that doing this would degrade performance. 

Since inactivate channels may represent absence of a feature in a task instead of having low emphasis, thus they are likely to have no importance. Therefore another property $\phi_k(0)=0$, not shown in Eq. (\ref{property}), could also be important for a good transformation. To investigate this, we add a constant $r$ to $\phi_k(\lambda)$ and see how the performance would change. The third plot in Figure \ref{visualize_property} shows the opposite result: a small constant added to the transformation helps further improve the performance. As adding this constant has more influence on small-valued channels, we conjecture that this helps further alleviate the channel bias problem, and that some inactivate channels indeed should gain some focus.

Apparently, $\phi_k(\lambda)$ is not the only function that satisfies all the three properties. We consider the power function $g(\lambda)=\lambda^k$, where $k>0$. Although being very simple, this function matches all desired properties. The rightmost plot in Figure \ref{visualize_property} shows that this function can indeed improve the performance as well. Note that this function has been used in~\cite{free_lunch}, where it is called the Tukey’s Ladder of Powers transformation~\cite{tukey}, and is used to transform the feature distribution to be more like a Gaussian distribution. Here we show that mitigating the channel bias problem may be another reason for why it works.

\section{Details of MMC Calculation and Comparison}
\subsection{Oracle Transformation}
\label{MMC_calculation}
To apply the oracle transformation, for every test-time dataset $D$, we first calculate feature mean $\uu_c$ and variance $\ss_c$ of each class $c$ in $D$. Then for every sampled binary classification task $\tau=\{\mathcal{S}_{\tau}, \mathcal{Q}_{\tau}\}$ that aims at discriminating two classes $c_1$ and $c_2$, we calculate the oracle MMC $\oo$ directly from Eq. (\ref{oracle_adjustment}). Next, we standardize each image feature $\zz$ in $\mathcal{S}_{\tau}$ and $\mathcal{Q}_{\tau}$, and multiply it by $\oo$ to obtain the transformed feature $\zz\leftarrow\oo\odot\hz$. The transformed features can be already used for classification, but we find that for some channels with very small means $\mu_l$, the corresponding value of  oracle MMC $\omega_l$ becomes too big, deviating from what we expect. To avoid generating such outliers, we additionally restrict that for every channel $l$, the ratio of transformed MMC to the original MMC should not surpass a threshold, i.e., $\omega_l/\omega^o_l\leq\alpha$ for some $\alpha\in\mathbb{R}^+$. If a channel $l$ does not meet this requirement, we simply set $\omega_l=\omega^o_l$. In all of our experiments, we set $\alpha=50$. The optimal MMC visualizied in Figure \ref{visiualize_optimal} and Figure \ref{visiualize_optimal_2} is also computed using this strategy. We leave it for future work to investigate the reason behind such phenomenon.
\subsection{Choice of Distance Measure}
\label{distance_clarification}
The normalized mean square difference $d(\xx,\bm{y})=\frac{1}{d}\sum_{l=1}^d(x_l-y_l)^2/x_l^2$ has the advantage of having equal treatment for both channels with small and large values. However, it can be largely influenced by ``outlier channel'' with very small $x_l$. Since dataset-level MMC $\oo_D$ is averaged over MMCs of all possible tasks in $D$, it is more stable and can use such distance measure. The task-level MMC and image features have much higher variances acorss tasks/images, thus for these two fine-grained settings we just use the mean square difference $d(\xx,\bm{y})=\frac{1}{d}\sum_{l=1}^d(x_l-y_l)^2$. This, however, introduces another problem that critical changes of channels with small values are always ignored by such unnormalized distance. Let's see a simple example. Let $\oo_1=(0.05,0.08,0.87)$ and $\oo_2=(0.4,0.3,0.3)$ be two 3-dimensional $l1$-normalized MMCs. Assume that after transformation, their $l1$-normalized values become $\oo_1'=(0.15,0.1,0.75)$ and $\oo_2'=(0.55,0.22,0.23)$. The value of the first dimension of $\oo_1$ triples and surpasses that of the second dimension after transformation, thus the channel emphasis changes substantially, while the channel emphasis of $\oo_2$ does not change much. We expect that the distance measuring the change of $\oo_1$ should be much larger than that measuring the change of $\oo_2$. The normalized mean square differences between the MMC before and after transformation are $1.36$ and $0.09$ for $\oo_1$ and $\oo_2$ respectively, which is in line with our intuition. However, the mean square differences are $0.008$ and $0.011$ for $\oo_1$ and $\oo_2$ respectively. Thus under such circumstances, the normalized  mean square difference is a much better choice. Although being simple, $\oo_2$ and $\oo_1$ are a good analogy to the MMC pattern on \emph{mini}ImageNet and some other datasets in Figure \ref{visiualize_optimal}, respectively. In Figure  \ref{visiualize_optimal}, we can see that most MMC values on \emph{mini}ImageNet are around mid-level, which resembles $\oo_2$; most MMC values on other datasets are either very small or large, which resembles $\oo_1$. This explains why in Table \ref{distance} the task-level and image-level differences on \emph{mini}ImageNet are not smaller than those on other datasets. We leave it for future work to find a distance measure that could avoid unstable results, while being sensitive to small-valued channels.


\section{More Details on Fine-tuning Based Method}
For fine-tuning methods in Figure \ref{shot_analysis}, we grid search the best hyperparameters in each shot setting \emph{on the test set}. All best configurations are shown in Table \ref{hyperpameter}. As seen, the hyperparameters of fine-tuning methods are very sensitive to the number of shots. In low-shot settings, care should be taken for controlling the total steps of finetuning and learning rate, in order to avoid overfitting. This phenomenon is also shown in~\cite{howtotrainmaml}, where the authors show that MAML~\cite{maml}, one of the most widely adopted finetuning-based methods, has a much higher optimal test-time fine-tuning steps than expected.


\begin{table*}[t]
\caption{Found best hyperparameters of the test-time finetuning method on \emph{mini}ImageNet. The feature extrator is ResNet-12, trained by S2M2 algorithm.}
\label{hyperpameter}
\centering
\begin{tabular}{c|ccc}
Shot & Batch size & Number of epochs & Learning rate\\\hline
1 & 5 & 10 & 0.1\\
5 & 25 & 30 & 0.05\\
10 & 50 & 50 & 0.05\\
20 & 50 & 100 & 0.02\\
50 & 64 & 100 & 0.01\\
100 & 64 & 500 & 0.005\\
400 & 64 & 500 & 0.005\\

\end{tabular}
%
\end{table*}

\section{Error Bars}
All results regarding performance in the main paper are shown without error bars. In Table \ref{seed_impact}, we show how different seeds affect the improvement brought by the simple transformation $\phi_k(\lambda)$. There are two seeds that could influence the result, one for training, and one for testing. When considering test seed, we fix the feature extractor and use different seeds to sample tasks; when considering train seed, we fix the test seed (same tasks) and evaluate different feature extractors trained with different seeds. As seen, while varying the test seed hardly affect the performance, varying the train seed produces some fluctuations. 
After considering the fluctuations, the improvement given by the transformation can still be statistically guaranteed.

\begin{table*}[t]
\setlength\tabcolsep{2.3pt}

\caption{The influence of using different seeds during training or testing. The feature extractor is trained by PN on \emph{mini}-train. 
Average 5-way 5-shot performance gains brought by the simple transformation on 10 datasets with 95\% confidence interval (over 5 trials) are shown.}
\centering
\begin{tabular}{c|cccccccccc}
\label{seed_impact}
\\[-1em]
 Seed  & \emph{mini}-test & CUB & Texture & TS & PlantD & ISIC & ESAT & Sketch & QDraw & Fungi
\\

Test & \pacc{+2.36}{0.06}  &\pacc{+2.98}{0.03} &\pacc{+4.26}{0.10} &\pacc{+2.37}{0.06} &\pacc{+8.04}{0.14} &\pacc{+2.80}{0.10} &\pacc{+5.50}{0.09} &\pacc{+5.02}{0.09} &\pacc{+5.46}{0.14} &\pacc{+3.90}{0.11}

\\
Train & \pacc{+2.08}{0.33} & \pacc{+3.28}{0.43} & \pacc{+3.78}{0.77} & \pacc{+2.00}{0.89} & \pacc{+8.21}{0.80} & \pacc{+3.64}{0.44} & \pacc{+4.03}{1.37} & \pacc{+4.21}{0.93} & \pacc{+7.01}{2.62} & \pacc{+3.59}{0.50}

\end{tabular}
%
\end{table*}

\end{document}